\newcolumntype{L}[1]{>{\raggedright\let\newline\\\arraybackslash\hspace{0pt}}m{#1}}
\newcolumntype{C}[1]{>{\centering\let\newline  \\\arraybackslash\hspace{0pt}}m{#1}}
\newcolumntype{R}[1]{>{\raggedleft\let\newline \\\arraybackslash\hspace{0pt}}m{#1}}
\newtheorem{theorem}{Theorem}[section]
\newtheorem{lemma}[theorem]{Lemma}
\newtheorem{proposition}[theorem]{Proposition}
\newcommand{\NM}[2]{\| #1 \|_{#2}} 
\newcommand{\R}{\mathbb{R}}
\newcommand{\DifReg}{\ell_{1\text{-}2}}
\newcommand{\D}{\mathcal{D}}
\newcommand{\Px}[2]{\text{prox}_{#1}( #2 ) }
\newcommand{\SO}[2]{S_{#1}\left(#2\right)}
\newcommand{\sign}[1]{ \text{\,sign} ( #1 )}
\newcommand{\Diag}[1]{ \text{\,diag} \left(  #1 \right) }
\newcommand{\TV}[1]{ \text{TV}_{1\text{-}2} ( #1 )}
\newcommand{\Tr}[1]{\text{Tr}\left( #1 \right) }
\begin{document}
%
% paper title
% Titles are generally capitalized except for words such as a, an, and, as,
% at, but, by, for, in, nor, of, on, or, the, to and up, which are usually
% not capitalized unless they are the first or last word of the title.
% Linebreaks \\ can be used within to get better formatting as desired.
% Do not put math or special symbols in the title.
\title{Fast Learning of Nonconvex $\ell_{1\text{-}2}$-Regularizer
 using the Proximal Gradient Algorithm}
%
%
% author names and IEEE memberships
% note positions of commas and nonbreaking spaces ( ~ ) LaTeX will not break
% a structure at a ~ so this keeps an author's name from being broken across
% two lines.
% use \thanks{} to gain access to the first footnote area
% a separate \thanks must be used for each paragraph as LaTeX2e's \thanks
% was not built to handle multiple paragraphs
%

\author{Quanming Yao,~\IEEEmembership{Member~IEEE,}
        James T. Kwok,~\IEEEmembership{Fellow~IEEE}
        and Xiawei Guo,~\IEEEmembership{Member~IEEE}
        % <-this % stops a space
\thanks{The authors are with the Department of Computer Science and Engineering, 
	Hong Kong University of Science and Technology, Clear Water Bay,
	Hong Kong. E-mail: \{qyaoaa, jamesk, xguoae\}@cse.ust.hk.}% <-this % stops a space
}

% note the % following the last \IEEEmembership and also \thanks - 
% these prevent an unwanted space from occurring between the last author name
% and the end of the author line. i.e., if you had this:
% 
% \author{....lastname \thanks{...} \thanks{...} }
%                     ^------------^------------^----Do not want these spaces!
%
% a space would be appended to the last name and could cause every name on that
% line to be shifted left slightly. This is one of those "LaTeX things". For
% instance, "\textbf{A} \textbf{B}" will typeset as "A B" not "AB". To get
% "AB" then you have to do: "\textbf{A}\textbf{B}"
% \thanks is no different in this regard, so shield the last } of each \thanks
% that ends a line with a % and do not let a space in before the next \thanks.
% Spaces after \IEEEmembership other than the last one are OK (and needed) as
% you are supposed to have spaces between the names. For what it is worth,
% this is a minor point as most people would not even notice if the said evil
% space somehow managed to creep in.

% The paper headers
\markboth{IEEE TRANSACTIONS ON NEURAL NETWORKS AND LEARNING SYSTEMS}%
{}
% The only time the second header will appear is for the odd numbered pages
% after the title page when using the twoside option.
% 
% *** Note that you probably will NOT want to include the author's ***
% *** name in the headers of peer review papers.                   ***
% You can use \ifCLASSOPTIONpeerreview for conditional compilation here if
% you desire.

% If you want to put a publisher's ID mark on the page you can do it like
% this:
%\IEEEpubid{0000--0000/00\$00.00~\copyright~2015 IEEE}
% Remember, if you use this you must call \IEEEpubidadjcol in the second
% column for its text to clear the IEEEpubid mark.

% use for special paper notices
%\IEEEspecialpapernotice{(Invited Paper)}

% make the title area
\maketitle

% As a general rule, do not put math, special symbols or citations
% in the abstract or keywords.
\begin{abstract}
Recently, nonconvex regularizers have been shown to outperform traditional convex regularizers.
In particular,
the $\ell_{1\text{-}2}$ regularizer
(based on the difference of $\ell_1$- and $\ell_2$-norms)
yields better recovery performance than the $\ell_0$ and $\ell_1$-regularizers on
various tasks.
However, 
it is still challenging
to efficiently solve the resultant $\ell_{1\text{-}2}$ regularization problem.
As both the $\ell_1$- and $\ell_2$-norms are
not differentiable,
popular optimization algorithms cannot be used.
In this paper, 
we derive a cheap closed-form solution for the proximal step associated with the
$\ell_{1\text{-}2}$-regularizer.
This enables state-of-the-art proximal gradient algorithms to be used
for fast optimization.
We further extend the proposed solution to
low-rank matrix learning and the total variation model.
Experiments 
on both synthetic and real-world data sets
show that using the proximal gradient algorithm with 
the proposed solution
is much more efficient
than the state-of-the-art.
\end{abstract}

% Note that keywords are not normally used for peerreview papers.
\begin{IEEEkeywords}
Nonconvex regularization,
Proximal gradient algorithm,
Compressed sensing,
Matrix completion
\end{IEEEkeywords}

% For peer review papers, you can put extra information on the cover
% page as needed:
% \ifCLASSOPTIONpeerreview
% \begin{center} \bfseries EDICS Category: 3-BBND \end{center}
% \fi
%
% For peerreview papers, this IEEEtran command inserts a page break and
% creates the second title. It will be ignored for other modes.
\IEEEpeerreviewmaketitle

\section{Introduction}
\label{sec:intro}

%\cite{blumensath2012accelerated}

\IEEEPARstart{M}{achine}
learning problems are usually formulated as
\begin{align}
\min_{x} F(x) \equiv f(x) + \lambda g(x),
\label{eq:pro}
\end{align}
where $x \in \R^d$ is the model parameter, 
$f$ is a smooth loss function,
and $g$ is the regularizer.
The proper choice of regularization is usually the key to achieving good generalization performance for learning problems.

Recently,
sparse and low-rank regularization have attracted lots of attention.
Since direct minimization of the $\ell_0$-norm and rank function are NP-hard \cite{donoho2006sparse,candes2009exact},
convex regularizers have been commonly used in \eqref{eq:pro} instead.
Examples include the $\ell_1$-regularizer for compressed sensing \cite{hale2008fixed,beck2009fast}
and the nuclear norm regularizer for low-rank matrix or tensor completion \cite{candes2009exact,cai2010singular}.
Comparing with direct minimization with the $\ell_0$-norm or rank function,
such convex optimization problems are computationally tractable
where many convex optimization techniques can be applied.
However, such approximation may yield suboptimal performance due to the biased approximation to $\ell_0$-norm in the sense that $\ell_1$-norm is dominated by entries with large magnitudes, 
unlike $\ell_0$-norm in which all nonzero entries have equal contributions.

This motivates the design of many nonconvex regularizers,
such as the $\ell_p$-regularizer with $p \in (0, 1)$ \cite{chartrand2008iteratively},
capped $\ell_1$-norm \cite{zhang2010analysis}
and the
log-sum-penalty (LSP) \cite{candes2008enhancing}.
They usually produce better empirical performance than the convex $\ell_1$-norm.
Among them, 
a recently proposed nonconvex regularizer is the $\DifReg$-regularizer
\cite{esser2013method,yin2015minimization,lou2015computing},
which is written as a 
difference of the $\ell_1$- and $\ell_2$-norms of the parameter $x$:
\begin{align}
%g(x) = 
\NM{x}{1\text{-}2}
=\NM{x}{1} - \NM{x}{2}.
\label{eq:difl12}
\end{align}
A two dimensional case is plotted in Figure~\ref{fig:curve},
we can see that $\DifReg$-regularizer approaches the $x_1$-axis and $x_2$-axis closer as the values get smaller.
Hence,
it gives better approximation compared with above mentioned nonconvex regularizers.
Such intuition is later supported by many empirical evidences.
For example,
in compressed sensing problems with an ill-conditioned dictionary, 
the $\DifReg$-regularizer 
has been shown  to
outperform
existing convex and nonconvex regularizers on the reconstruction of sparse vectors
\cite{yin2015minimization,lou2015computing}.
In stochastic collocation,
it shows more reliable recovery on both sparse and non-sparse signals 
\cite{yan2017sparse}.
In image processing,
the total variation model  with the $\DifReg$-regularizer 
achieves state-of-the-art performance
\cite{lou2015weighted};
and the $\DifReg$-regularizer has also been extended to low-rank matrix learning problems,
achieving better performance than both the nuclear norm regularization and factorization approaches \cite{p2015phase,matruncated2016}.
Along with such good empirical performance,
its theoretical recovery guarantee has been recently established 
for compressed sensing \cite{yin2015minimization} and low-rank matrix completion \cite{matruncated2016},
which shows reliable recovery can be obtained under restricted isometry property \cite{donoho2006sparse}.

Despite its good empirical performance and sound theoretical guarantee,
optimization with the $\DifReg$-regularizer is much harder.
When the convex $\ell_1$-regularizer is used in \eqref{eq:pro},
many efficient algorithms have been developed,
examples including the alternating direction method of multipliers (ADMM) \cite{boyd2011distributed}, 
the Frank-Wolfe algorithm \cite{jaggi2013revisiting}
and the Bregman iterative algorithm \cite{yin2008bregman}.
However,
since the $\DifReg$-regularizer is nonconvex and has two nonsmooth components,
those popular optimization algorithms all fail.
As a result,
we need to seek help from more general optimization algorithms.
As \eqref{eq:difl12} represents this regularizer 
as the difference of two convex functions,
difference-of-convex programming (DCA) \cite{yuille2002concave} can naturally be used.
It is indeed the only given algorithm previously using the $\DifReg$-regularizer in above applications
\cite{yin2015minimization,lou2015computing,yan2017sparse,lou2015weighted,matruncated2016}.
However, each DCA iteration requires exact solving a convex subproblem which usually does not have a cheap closed-form solution,
and other numerical iteratively algorithms have to be used for solving the subproblem.
This makes DCA inefficient and slow in general \cite{gongZLHY2013,li2015accelerated}.
Sequential convex programming (SCP) \cite{zhaosong2012},
a DCA variant with lower iteration complexity,
can also be applied.
However, it still suffers from slow convergence \cite{li2015accelerated}.

\begin{figure*}[ht]
\centering
\subfigure[$\ell_1$-norm.]
{\includegraphics[width = 0.27\textwidth]{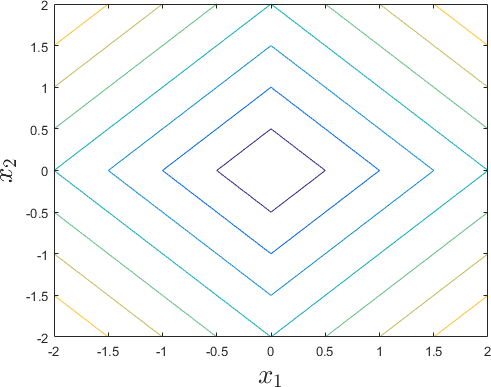}}
\quad\quad
\subfigure[Capped $\ell_1$-norm \cite{zhang2010analysis}.]
{\includegraphics[width = 0.27\textwidth]{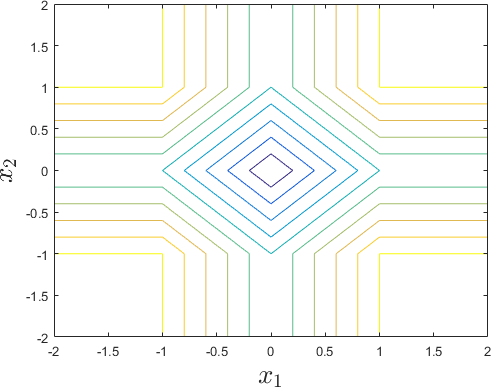}}
\quad\quad
\subfigure[$\ell_p$-norm ($p = 0.5$) \cite{chartrand2008iteratively}.]
{\includegraphics[width = 0.27\textwidth]{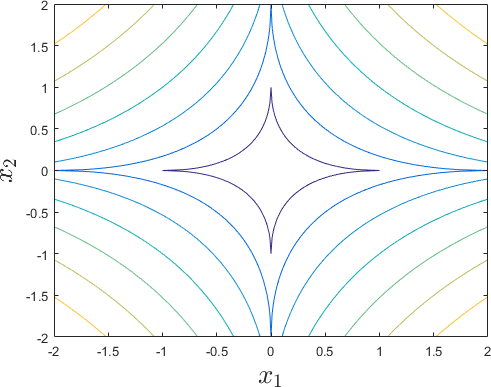}}

\subfigure[Log-sum-penalty \cite{candes2008enhancing}.]
{\includegraphics[width = 0.27\textwidth]{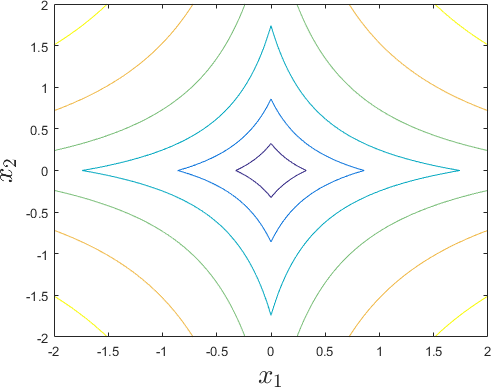}}
\quad\quad
\subfigure[$\ell_{1\text{-}2}$-regularizer \cite{yin2015minimization}.]
{\includegraphics[width = 0.27\textwidth]{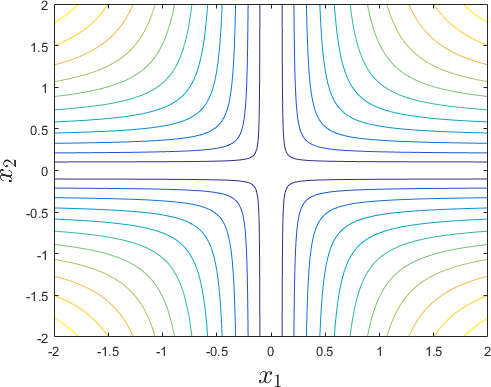}}

%\vspace{-5px}
\caption{Level curves of the various regularizers.}
\label{fig:curve}
\vspace{-10px}
\end{figure*}

Recently,
the proximal gradient (PG) algorithm \cite{parikh2013proximal} has 
become a promising approach for \eqref{eq:pro},
which still requires $f$ to be smooth but allows the regularizer $g$ to be nonconvex and nonsmooth
\cite{gongZLHY2013,li2015accelerated,bolte2014proximal}.
It has been successfully used with some nonconvex regularizers,
such as the log-sum-penalty,
on the learning of sparse vectors \cite{gongZLHY2013} and low-rank matrices \cite{qyao2015icdm,lu2016nonconvex}.
Unlike ADMM and Frank-Wolfe algorithm,
PG algorithms have rigorous convergence guarantee which ensures that a critical point of \eqref{eq:pro} can be produced \cite{gongZLHY2013,li2015accelerated}.
However,
their efficiency hinges on having a cheap closed-form solution of the proximal step
\begin{align}
\Px{\lambda g}{z} = \arg\min_x \frac{1}{2} \NM{x - z}{2}^2
+ \lambda g(x).
\label{eq:prox3}
\end{align} 
where $\lambda \ge 0$ and $z \in \R^d$ is an input vector.

Usually,
the closed-form solution of the proximal step 
needs
some special properties on $g$,
such as convexity (e.g., the $\ell_1$-norm \cite{efron2004least} and nuclear norm \cite{cai2010singular}) 
or separability among the dimensions of $x$ (e.g., the log-sum-penalty \cite{gongZLHY2013} and $\ell_p$-norm with $p \in (0,1)$ \cite{chartrand2008iteratively}).
However,
the $-\NM{x}{2}$ inside the $\DifReg$-regularizer \eqref{eq:difl12} makes it neither convex nor separable.
As a result,
previous approaches to develop the closed-form solution on proximal step all
fail for the $\DifReg$-regularizer.
Then, 
its proximal step has to be solved exactly using other iterative algorithms.
This gives rise to high iteration time complexity inside PG algorithm,
which in turn destroys the efficiency of the whole algorithm
making it no significant superiorities over DCA based algorithms.
Besides,
it remains unclear how the proximal step can be computed efficiently.

To make PG algorithm efficient on $\DifReg$ regularized learning problems,
in this paper, we
derive a cheap closed-form solution for the proximal step
of this regularizer. 
The key observation is that when $g$ is 
the $\DifReg$-regularizer, the optimal solution of proximal step \eqref{eq:prox3} cannot be zero unless $z = 0$.
Thus,
we can only consider the case when $z \neq 0$,
this helps to remove non-differentiability of the $\ell_2$-norm
(which is not differential only at the zero point).
Then,
by checking the first-order optimality condition of \eqref{eq:prox3},
a closed-form solution can be obtained.
We further extend 
the proposed solution
to low-rank matrix learning and total variation model.
With such a closed-form solution at hand,
we are ready to reuse existing PG algorithms for problem \eqref{eq:pro}.
Specifically,
we pick up the nonmonotone accelerated proximal
gradient (nmAPG) algorithm \cite{li2015accelerated} as it is the state-of-art PG algorithm.
We perform experiments on various tasks as 
on compressed sensing, matrix completion and image denoising.
The results show nmAPG armed with given closed-form solution is much faster than DCA based algorithms for $\DifReg$ regularization,
and the $\DifReg$-regularizer gives better performance than other nonconvex regularizers.

The rest of the paper is organized as follows.
Section~\ref{sec:rev} briefly reviews proximal gradient algorithms and the DCA algorithm for $\DifReg$ regularization, 
then Section~\ref{sec:proxstep}
derives
the closed-form proximal step of the 
$\DifReg$-regularizer.
Section~\ref{sec:app}
extends the proposed solution
to the low-rank matrix completion problem and total variation model.
Experimental results are shown in 
Section~\ref{sec:expts}, and  the last section gives some
concluding remarks
and discuss some possible future works.
%\footnote{+++ need we mention this paper \cite{lou2016fast}???}

%While the present paper was under review, we are aware of a recent work by \citeauthor{lou2016fast} \citeyear{lou2016fast}, which also gives such closed-form solution on proximal step.  Our work \cite{yao2016fast} is independently done with theirs, and the differences are: \footnote{$\surd$ *** is it simpler?} (i) we use different and simpler proof strategies; and (ii). they only consider compress sensing problem, while we further extend $\DifReg$-regularizer to low-rank matrix learning and the total variation model.  

%\footnote{*** add discussion w/ that similar work}

\vspace{4px}
\noindent
\textbf{Notation:}
For a smooth function $f$, $\nabla f$ denotes its gradient.
If $f$ is convex but nonsmooth,
its subdifferential is
$\partial f(x) = \left\lbrace  s : f(y) \le f(x) + (y - x)^{\top} s \right\rbrace$.
For a scalar $x \in \R$, 
%$|x|$ is its absolute value; and 
%$\sign{x}$ indicate its sign (i.e.,
$\sign{x} = 1$ if $x > 0$, $-1$ if $x < 0$, and $0$ otherwise.
For a vector $x \in \R^d$, $\NM{x}{1} = \sum_{i = 1}^d |x_i|$ is its $\ell_1$-norm, 
and $\NM{x}{2} = ( \sum_{i = 1}^d x_i^2 )^{1/2}$ is the $\ell_2$-norm.
For a matrix $X \in \R^{m \times n}$ (assume that $m \le n$),
its SVD is
$X = U \Sigma V^{\top}$, where 
$\Sigma = \Diag{[\sigma_1, \dots, \sigma_m]}$ with $\sigma_1 \ge \cdots \ge \sigma_m$,
and $\NM{X}{*} = \sum_{i = 1}^m \sigma_i$ is the nuclear norm;
$\NM{X}{21} = \sum_{j = 1}^n \left(  \sum_{i = 1}^m X_{ij}^2 \right)^{1/2}$ is the matrix $(2,1)$-norm;
$\NM{X}{F} = ( \sum_{i = 1}^m \sum_{j = 1}^n X_{ij}^2 )^{1/2}$ is the Frobenious norm;
and $\NM{X}{1} = \sum_{i = 1}^m \sum_{j = 1}^n |X_{ij}|$ is the $\ell_1$-norm.
For a square matrix $X \in \R^{m \times m}$,
$\Tr{X} = \sum_{i = 1}^m X_{ii}$ denotes its trace.

%%%%%%%%%%%%%%%%%%%%%%%%%%%%%%%%%%%%%%%%%%%%%%%%%%%%%%%%%%%%%%%%%%

\section{Review}
\label{sec:rev}

\subsection{Difference of Convex Algorithm}
\label{sec:dca}

As the $\DifReg$-regularizer can be naturally decomposed as 
a difference of two convex functions,
existing $\DifReg$ solvers 
\cite{yin2015minimization,lou2015weighted,matruncated2016}
are all based 
on the difference of convex algorithm (DCA) \cite{yuille2002concave}
(Algorithm~\ref{alg:dca}).
With $g$ in \eqref{eq:pro} being the $\DifReg$-regularizer,
at the $t$th iteration,
DCA replaces
the nonconvex $\DifReg$ by a linear approximation.
The next iterate 
$x_{t + 1}$
is generated from the following 
%\footnote{$\surd$ *** that's not correct. it only replaces the noncvx part}
problem:
%\footnote{$\surd$ plz have a look at the definition of subgradient on $\NM{}{2}$, 
%	it is the same.
%	I write in this manner because it is used by papers using $\DifReg$ \cite{yin2015minimization}.
%	*** it's not the same as in alg1}
\begin{align}
\! x_{t + 1} \!
= \! \arg\min_{x} f(x) 
\! + \! \lambda \left( \NM{x}{1} - \NM{x_t}{2} 
\! -\!  (x - x_t)^{\top} s_t \right)
\!,  
\label{eq:dca}
\end{align} 
where $s_t \in \partial \NM{x_t}{2}$,
and
\begin{align}
s_t
= 
\begin{cases}
\frac{
x_t
}{\NM{x_t}{2}},
& x_t \neq 0
\\
\text{any}\; u \;\text{with}\; \NM{u}{2} \le 1,
&\text{otherwise}
\end{cases}.
\label{eq:subl2}
\end{align}
The algorithm converges to a critical point of \eqref{eq:pro} \cite{yuille2002concave}.
However, (\ref{eq:dca})
does not have a closed-form solution, and has to be solved by 
algorithms such as 
%\footnote{$\surd$ *** add full name}
alternating direction method of multipliers (ADMM)
\cite{boyd2011distributed},
Frank-Wolfe algorithm \cite{jaggi2013revisiting} and Bregman iterative algorithm \cite{yin2008bregman}.
Hence, each iteration can be expensive and inefficient
%\footnote{$\surd$ *** sort refs in numeric order} 
\cite{gongZLHY2013,li2015accelerated,zhaosong2012}.

\begin{algorithm}[ht]
\caption{DCA for solving problem~\eqref{eq:pro}, with $g$ being the $\DifReg$-regularizer
\cite{yin2015minimization}.}
\begin{algorithmic}[1]
	\STATE{Initialize $x_1 = 0$;}
	\FOR{$t = 1, \dots, T$}
%	\IF{$x_t = 0$}
%		\STATE $x_{t + 1} = \arg\min_x f(x) + \lambda \NM{x}{1}$;
%	\ELSE
	\STATE $s_{t} \in \partial \NM{x_t}{2}$;
	\STATE $x_{t + 1} = \arg\min_x f(x) + \lambda \NM{x}{1} - \lambda (x^{\top} s_t)$;
%	\ENDIF
	\ENDFOR
	\RETURN $x_{T + 1}$.
\end{algorithmic}
\label{alg:dca}
\end{algorithm}

When $f$ is $L$-Lipschitz smooth (i.e.,
$\NM{\nabla f(x) - \nabla f(y) }{2} \le L \NM{x - y}{2}$ for any $x,y$),
a more efficient DCA variant is the sequential convex programming (SCP) algorithm \cite{zhaosong2012}
(Algorithm~\ref{alg:scp}).
It 
%utilizes such smoothness of $f$ to avoid expensive iteration in the original DCA algorithm, and 
generates $x_{t + 1}$ as
\begin{eqnarray}
x_{t + 1}
& = & \arg\min_x f(x_t) + (x - x_t)^{\top} \nabla f(x_t) + \frac{L}{2}\NM{x - x_t}{2}^2
\notag \\
& &+ \lambda \left( \NM{x}{1} - \NM{x_t}{2} - (x - x_t)^{\top} s_t \right)
\notag \\
&= & \Px{\frac{\lambda}{L} \NM{\cdot}{1}}{x_t + \frac{\lambda}{L} s_t - \frac{1}{L} \nabla f(x_t)},
\label{eq:scp}
\end{eqnarray}
where $s_t \in \partial \NM{x_t}{2}$.
The proximal step associated with the $\ell_1$-regularizer
has the closed-form solution 
$[ \Px{\lambda \NM{\cdot}{1}}{z} ]_i = \sign{z_i} \max(|z_i| - \lambda, 0)$ \cite{efron2004least}.
Thus,
each SCP iteration is cheap, and SCP is much faster than DCA.

\begin{algorithm}[H]
\caption{SCP for solving \eqref{eq:pro},
with $g$ being the $\DifReg$-regularizer
\cite{zhaosong2012}.}
\begin{algorithmic}[1]
	\STATE{Initialize $x_1 = 0$;}
	\FOR{$t = 1, \dots, T$}
	\STATE $s_{t} \in \partial \NM{x_t}{2}$;
	\STATE $x_{t + 1} = \Px{\frac{\lambda}{L} \NM{\cdot}{1}}{x_t + \frac{\lambda}{L} s_t - \frac{1}{L} \nabla f(x_t)}$;
	\ENDFOR
	\RETURN $x_{T + 1}$.
\end{algorithmic}
\label{alg:scp}
\end{algorithm}

However,
while the convergence of first-order optimization algorithms can be significantly improved
by acceleration
\cite{li2015accelerated,nesterov2013gradient},
SCP cannot be
accelerated.
Moreover, as $s_t$ is a subgradient,
the convergence of SCP 
can be slow 
near nonsmooth points \cite{beck2009fast}. 

\subsection{Proximal Gradient Algorithm}

The proximal gradient (PG) algorithm \cite{parikh2013proximal} has been commonly used for
solving optimization problems in the form of \eqref{eq:pro}.  Traditionally, both $f$ and $g$
are assumed to be convex, and $f$ is also required to be smooth.
At the $t$th iteration,
the next iterate $x_{t + 1}$ is generated as
\begin{align}
x_{t + 1}
= \Px{\frac{\lambda}{L} g}{x_t - \frac{1}{L} \nabla f(x_t)}.
\label{eq:pgiter}
\end{align}
The PG algorithm converges 
with a $O(1/T)$ rate, where $T$ is the number of iterations \cite{beck2009fast}.
Using Nesterov acceleration
\cite{beck2009fast,nesterov2013gradient},
\eqref{eq:pgiter} is slightly changed to
\begin{align}
y_t 
& = x_t + \frac{\alpha_{t - 1} - 1}{\alpha_t} \left( x_t - x_{t - 1} \right),
\label{eq:accpgiter1}
\\
x_{t + 1} 
& = \Px{\frac{\lambda}{L} g}{y_t - \frac{1}{L} \nabla f(y_t)},
\label{eq:accpgiter2}
\end{align}
where $\alpha_0 = \alpha_1 = 0$, 
and $\alpha_{t + 1} = \frac{1}{2}( \sqrt{4 \alpha_t^2 + 1} + 1)$, and
the resultant convergence rate is improved to $O(1/T^2)$.
In order for PG and its accelerated variant (APG) to be efficient,
the proximal step has to be cheap (e.g., has a closed-form solution).

Recently, the PG algoirthm (with its accelerated variant) have also been
extended to nonconvex problems
\cite{gongZLHY2013,li2015accelerated,bolte2014proximal,attouch2010proximal}, with
the state-of-the-art being the nonmonotone accelerated proximal
gradient (nmAPG) algorithm 
%(Algorithm~\ref{alg:nmAPG}) 
\cite{li2015accelerated}.
In the sequel,
we adopt nmAPG for learning with the $\DifReg$-regularizer.

%%%%%%%%%%%%%%%%%%%%%%%%%%%%%%%%%%%%%%%%%%%%%%%%%%%%%%%%%%%%%%%%%%

\section{Proximal Step for $\ell_{1\text{-}2}$ Regularizer}
\label{sec:proxstep}

The following Proposition shows the
existence of the solution on proximal step.

\begin{proposition}[\cite{attouch2010proximal}] \label{pr:solext}
%\footnote{$\surd$ *** this is the assumption used in
%	proposition~\ref{pr:solext}, right? so  move it there. this is not an "assumption" for the
%	l1-l2}
If $g$ is proper, lower semicontinuous, and $\inf g > -\infty$,
then the solution set of \eqref{eq:prox3} is nonempty and compact.
\end{proposition}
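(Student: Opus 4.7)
The plan is to apply the classical Weierstrass-type existence argument: the objective is coercive and lower semicontinuous, so its sublevel sets are compact, which simultaneously yields existence of minimizers and compactness of the minimizing set.

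First I would set $h(x) = \tfrac{1}{2}\NM{x-z}{2}^2 + \lambda g(x)$ and check its three key properties. Properness of $h$ follows immediately from properness of $g$: there exists $x_0$ with $g(x_0) < \infty$, hence $h(x_0) < \infty$. Lower semicontinuity of $h$ follows because $\tfrac{1}{2}\NM{\cdot - z}{2}^2$ is continuous and $\lambda g$ is lsc (since $\lambda \ge 0$ in \eqref{eq:prox3}), and the sum of a continuous function and an lsc function is lsc. Coercivity is where the assumption $\inf g > -\infty$ is used: for any $x$, $h(x) \ge \tfrac{1}{2}\NM{x-z}{2}^2 + \lambda \inf g$, and since the quadratic term tends to $+\infty$ as $\NM{x}{2} \to \infty$ while the lower bound $\lambda \inf g$ is a finite constant, $h(x) \to +\infty$.

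Next I would combine these. Pick $\alpha = h(x_0)$ and consider the sublevel set $L = \{ x : h(x) \le \alpha \}$. Coercivity makes $L$ bounded, lsc makes $L$ closed, and $x_0 \in L$ makes $L$ nonempty; hence $L$ is a nonempty compact subset of $\R^d$. By the Weierstrass theorem, the lsc function $h$ attains its infimum over the compact set $L$, and since any global minimizer of $h$ lies in $L$, this minimum is also the global infimum of $h$. Thus the solution set $S = \arg\min_x h(x)$ is nonempty. Finally, $S = \{x : h(x) \le \inf h\}$ is a sublevel set of the lsc function $h$, hence closed, and is contained in the bounded set $L$, so $S$ is compact.

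There is no real obstacle; the only subtlety worth flagging is the tacit use of $\lambda \ge 0$ (otherwise $\lambda g$ need not be lsc and coercivity can fail), which is consistent with the setup of \eqref{eq:prox3}. The whole argument is essentially a one-paragraph invocation of the direct method of the calculus of variations specialized to the proximal objective, and the statement is cited from \cite{attouch2010proximal} simply so it can be used as a black box in the subsequent derivation of the closed-form proximal map for the $\DifReg$-regularizer.
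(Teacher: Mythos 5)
Your proof is correct. The paper itself gives no proof of this proposition --- it is imported verbatim from \cite{attouch2010proximal} and used as a black box --- so there is nothing in the text to compare against; your direct-method argument (properness, lower semicontinuity of the sum, coercivity from $\inf g > -\infty$ plus the quadratic, compact sublevel sets, Weierstrass, and the solution set being a closed subset of a compact sublevel set) is the standard proof of exactly this statement, and your remark that $\lambda \ge 0$ is tacitly needed is a fair observation consistent with the setup of \eqref{eq:prox3}.
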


Note that $\inf_x \left( \NM{x}{1} - \NM{x}{2} \right)  \ge 0$,
as $\NM{x}{2} \le \NM{x}{1}$ for any $x \in \R^d$.
Thus, the $\DifReg$ regularization satisfies the assumptions in Proposition~\ref{pr:solext}. 
Let $z \in \R^d$ be an arbitrary vector, and 
$\phi(x) = \frac{1}{2}\NM{x - z}{2}^2 + \lambda( \NM{x}{1} - \NM{x}{2} )$,
then
%\footnote{$\surd$ *** why not just say that l1l2 satisfy the assumptions on g, and then the following
%	prop. why/where do u need all the rest?}
\begin{align}
x^* = \text{prox}_{\lambda \NM{\cdot}{1\text{-}2}}(z)
=  \arg\min_x \phi(x),
\label{eq:prox}
\end{align}
must exist.
However,
no closed-form solution has been offered,
thus $x^*$ has to be obtained using iterative algorithms, e.g., DCA \cite{yin2015minimization},
%\footnote{$\surd$ *** ref}
which makes PG algorithms inefficient.
%We provide a closed-form solution to \eqref{eq:prox} in this section.

%\begin{lemma} \label{lem:optest}
%$\inf_x \NM{x}{1} - \NM{x}{2} \ge 0$,
%and $\phi(x) = \infty$ if and only if $\NM{x}{2} = \infty$.
%\end{lemma}
%
%\begin{proof}
%Note that
%$\NM{x}{2} \le \NM{x}{1}$ for any $x \in \R^d$,
%thus $\inf_x \phi(x) \ge 0$. 
%As $z$ is a constant vector,
%$\frac{1}{2}\NM{x - z}{2}^2 = \infty$ if and only if $\NM{x}{arg2}$
%
%$\lim_{\NM{x}{2} \rightarrow \infty} \frac{1}{2}\NM{x - z}{2}^2 = \infty$
%and $\lim_{\NM{x}{2} \rightarrow \infty}$,
%then $\lim_{\NM{x}{2} \rightarrow \infty} \phi(x) = \infty$.
%\footnote{*** but $\lim_{\NM{x}{2} \rightarrow \infty} ( \NM{x}{1} - \NM{x}{2} )=\infty$ too?}
%\end{proof}

\subsection{Numerical Method}
\label{sec:dcaprox}

The closed-form solution of the proximal step \eqref{eq:prox} will be later derived at Section~\ref{sec:closedsol}.
To better illustrate the benefit of using such closed-form,
we first show how \eqref{eq:prox} can be solved numerically with DCA algorithm (Algorithm~\ref{alg:dca}).
Due to the special form of $\phi$,
step~4 in Algorithm~\ref{alg:dca} becomes
%\begin{align}
%x_{t + 1}
%= \Px{\lambda \NM{\cdot}{1}}{z},
%\label{eq:pxdca1}
%\end{align}
%and step~6 becomes
\begin{align}
x_{t + 1}
= \Px{\lambda \NM{\cdot}{1}}{z + \lambda s_t}.
\label{eq:pxdca2}
\end{align}
Thus,
\eqref{eq:pxdca2} has the closed-form solution.
If $t_p$ iterations are used by the DCA algorithm,
$O(d t_p)$ time is needed.

While SCP algorithm is generally more efficient than DCA algorithm,
again due to the special form of $\phi$,
\eqref{eq:pxdca2} is equivalent to SCP updates \eqref{eq:scp}.
Thus,
DCA is as efficient as SCP algorithm on solving \eqref{eq:prox}.

%%%%%%%%%%%%%%%%%%%%%%%%%%%%%%%%%%%%%%

\subsection{Closed-form Solution}
\label{sec:closedsol}

To compute the proximal step, 
the regularizer 
is usually required
to be convex \cite{cai2010singular}
or separable among dimensions of $x$ \cite{chartrand2008iteratively,gongZLHY2013}.
However, this is not the case for the $\DifReg$-regularizer.
Our approach here is inspired by the following Lemma.

\begin{lemma} \label{lem:nzsol}
$x^* = 0$ iff $z$ in (\ref{eq:prox}) is equal to $0$.
\end{lemma}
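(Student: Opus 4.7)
\bigskip
\noindent\textbf{Proof plan for Lemma~\ref{lem:nzsol}.}
The plan is to prove the two implications separately, the first by a direct lower bound and the second by the contrapositive via an explicit construction that exploits the key feature of the $\DifReg$-regularizer: it vanishes on $1$-sparse vectors.

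For the easy direction, suppose $z = 0$. Then
\begin{align*}
\phi(x) \;=\; \tfrac{1}{2}\NM{x}{2}^2 + \lambda\bigl(\NM{x}{1} - \NM{x}{2}\bigr) \;\ge\; 0,
\end{align*}
because $\NM{x}{1} \ge \NM{x}{2}$ holds for every $x \in \R^d$ and $\tfrac{1}{2}\NM{x}{2}^2 \ge 0$. Since $\phi(0) = 0$, the vector $x = 0$ is a minimizer, and I would invoke Proposition~\ref{pr:solext} only to confirm that a minimizer exists at all.

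For the converse I would argue by contrapositive: assuming $z \ne 0$, I will exhibit some $x \ne 0$ with $\phi(x) < \phi(0) = \tfrac{1}{2}\NM{z}{2}^2$, which forces $x^* \ne 0$. The construction is to pick an index $k$ with $|z_k| = \NM{z}{\infty} > 0$ and take $x = z_k e_k$, where $e_k$ is the $k$-th standard basis vector. The key observation is that $x$ is $1$-sparse, so $\NM{x}{1} = |z_k| = \NM{x}{2}$ and the regularizer contributes zero. A direct expansion then gives
\begin{align*}
\phi(z_k e_k) \;=\; \tfrac{1}{2}\sum_{i \ne k} z_i^2 \;=\; \tfrac{1}{2}\NM{z}{2}^2 - \tfrac{1}{2} z_k^2 \;<\; \phi(0),
\end{align*}
which is the required strict inequality.

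The argument is short and I do not anticipate real obstacles; the only step that requires any thought is spotting the right test vector. Trying natural candidates such as $x = \alpha z$ for small $\alpha > 0$ does not work because the regularizer is positive along that direction and can dominate the quadratic gain. The trick is to move off in a direction where $\NM{\cdot}{1} - \NM{\cdot}{2}$ is identically zero, namely any coordinate axis, which immediately kills the nonconvex penalty and leaves a plain least-squares comparison that is settled by choosing $k$ at the largest-magnitude entry of $z$.
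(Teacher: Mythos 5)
Your proposal is correct and follows essentially the same route as the paper: the forward direction by noting $\phi \ge 0 = \phi(0)$ when $z=0$, and the converse by contradiction/contrapositive using a $1$-sparse test vector supported on a nonzero coordinate of $z$, on which the $\DifReg$ penalty vanishes. The only cosmetic difference is that you pick the largest-magnitude coordinate while the paper picks an arbitrary nonzero one; either choice yields the required strict decrease.
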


\begin{proof}
\textit{Necessary condition}: Note that $\phi(x) \ge 0$ for any $x$,
and $\phi(0) = 0$ when $z = 0$.
Thus, if $z = 0$, the optimal solution is $x^* = 0$.
\textit{Sufficient condition}: 
If $x^* = 0$,
we show $z$ must be $0$.
In this case,
assume $z \neq 0$,
we can pick up an arbitrary non-zero dimension $z_j$ in $z$
and construct $\hat{x} \in \R^d$ as
$\hat{x}_i = 
\begin{cases}
0,
& i \neq j
\\
z_j, 
&
i = j
\end{cases}$.
Since $\NM{\hat{x}}{1} = \NM{\hat{x}}{2}$,
we have
\begin{align*}
\phi(x^*) 
= \phi(0) = \sum_{i = 1}^d z_i^2
> \sum_{i \neq j} z_i^2 = \phi(\hat{x}).
\end{align*}
This is in the contradictory with $x^*$ being optimal solution for $z \neq 0$.
Thus $x^* = 0$, we must have $z = 0$.
\end{proof}

%we immediately have $x^* = 0$ when $z = 0$.  Hence, 
Hence, in the sequel, we will only consider the case where $z \neq 0$ (and thus
$x^* \neq 0$).
%Next, we make use of the first-order optimality condition in Definition~\ref{lem:opt}.

\begin{lemma} \label{lem:opt}
If $x^*$ is the optimal solution to \eqref{eq:prox},
then $0 \in x^*- z + \lambda ( \partial \NM{x^*}{1} - \partial \NM{x^*}{2})$.
\end{lemma}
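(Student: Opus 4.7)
The plan is to recognize that the claimed inclusion is exactly the first-order necessary optimality condition for the minimizer of the nonconvex function $\phi$, and the whole point of invoking Lemma~\ref{lem:nzsol} (placed just before this lemma) is to avoid the only point where $\NM{\cdot}{2}$ is not differentiable, namely the origin.

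First I would dispose of the trivial case $z = 0$. By Lemma~\ref{lem:nzsol} we then have $x^* = 0$, and the required condition becomes $0 \in \lambda(\partial \NM{0}{1} - \partial \NM{0}{2})$. Both subdifferentials at the origin are closed convex sets containing $0$ (the unit $\ell_\infty$-ball and the unit $\ell_2$-ball, respectively), so choosing the zero element from each makes the inclusion hold trivially. This lets me focus on the case $z \neq 0$, for which Lemma~\ref{lem:nzsol} guarantees $x^* \neq 0$, and hence $\NM{\cdot}{2}$ is Fréchet differentiable at $x^*$ with $\nabla \NM{x^*}{2} = x^*/\NM{x^*}{2}$.

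For the main case I would apply Fermat's rule $0 \in \partial \phi(x^*)$, where $\partial$ denotes the Fréchet (equivalently, limiting) subdifferential at $x^*$. I would then decompose $\phi$ as the sum of the convex function $\tfrac{1}{2}\NM{x-z}{2}^2 + \lambda \NM{x}{1}$ and the function $-\lambda\NM{x}{2}$, the latter being continuously differentiable on a neighbourhood of $x^*$. Because one of the two summands is $C^1$ at $x^*$, the usual sum rule for the limiting subdifferential holds as an equality, giving
\begin{equation*}
\partial \phi(x^*) \;=\; (x^* - z) + \lambda\, \partial \NM{x^*}{1} \;-\; \lambda\, \frac{x^*}{\NM{x^*}{2}}.
\end{equation*}
Fermat's rule then yields a $u \in \partial \NM{x^*}{1}$ with $0 = x^* - z + \lambda u - \lambda x^*/\NM{x^*}{2}$. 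Since $x^*/\NM{x^*}{2} \in \partial \NM{x^*}{2}$, this is precisely the claimed inclusion $0 \in x^* - z + \lambda(\partial \NM{x^*}{1} - \partial \NM{x^*}{2})$.

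The main obstacle is justifying the sum-rule step, because in general $\partial(f-h) \neq \partial f - \partial h$ for the limiting subdifferential, and $\phi$ is genuinely nonconvex. This is exactly why Lemma~\ref{lem:nzsol} is indispensable: it eliminates the one point where $\NM{\cdot}{2}$ is non-differentiable, reducing the potentially awkward ``difference of nonsmooth convex'' piece to a smooth term near $x^*$, so that the clean smooth-plus-nonsmooth sum rule applies without any qualification conditions. Everything else is direct substitution.
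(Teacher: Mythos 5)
Your proof is correct, but it takes a genuinely different route from the paper's. The paper disposes of this lemma in one line: it cites the difference-of-convex optimality theory of \cite{yuille2002concave}, under which every critical point $x$ of $\phi$ satisfies $0 \in x - z + \lambda ( \partial \NM{x}{1} - \partial \NM{x}{2})$, and then notes that optimal points are critical points. Notably, the paper's proof does not invoke Lemma~\ref{lem:nzsol} at all at this stage; the fact that $x^* \neq 0$ is exploited only afterwards, when the abstract condition is specialized to \eqref{eq:opt2}. You instead derive the condition from first principles: Fermat's rule for the Fr\'echet/limiting subdifferential, together with the exact sum rule for a convex function plus a function that is $C^1$ near $x^*$ --- which is why you need $x^* \neq 0$, hence Lemma~\ref{lem:nzsol}, up front, and why you must treat $z = 0$ separately (which you do correctly, since $0$ lies in both $\partial \NM{0}{1}$ and $\partial \NM{0}{2}$). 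Your version buys self-containedness and precision: it names the subdifferential being used and justifies the decomposition $\partial(f-h) $ versus $\partial f - \partial h$, which the paper's citation leaves implicit. The paper's version buys brevity and slightly more generality: the DC criticality condition $0 \in \partial f_1(x^*) - \partial f_2(x^*)$ holds even where $f_2 = \lambda \NM{\cdot}{2}$ is nondifferentiable, so no case split is needed. Both arguments are sound.
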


\begin{proof}
According to \cite{yuille2002concave},
we must have 
\begin{align*}
0 \in x - z + \lambda ( \partial \NM{x}{1} - \partial \NM{x}{2}).
\end{align*}
for any critical point $x$ of \eqref{eq:prox}.
Then,
the Lemma holds as all optimal points must also be critical points.
\end{proof}

As $x^* \neq 0$, 
the only non-differentiable point of the $\ell_2$-norm
is removed. 
Using the fact that
$\partial \NM{x}{1}  = [ a_i ]$ where
\begin{align*}
a_i
%=[\partial \NM{x}{1} ]_i
= 
\begin{cases}
1,
&
x_i > 0
\\
[-1, 1],
&
x_i = 0
\\
-1,
&
\text{otherwise}
\end{cases},
\end{align*}
and
	$\partial \NM{x}{2}$ in 
(\ref{eq:subl2}),
%As $x_i^* \neq 0$,
%$a_i$ is equal to $\sign{x_i^*}$,
the condition in Lemma~\ref{lem:opt} can be simplified as
\begin{align}
0 \in x^* - z + \lambda \left( a
%\partial \NM{x^*}{1} 
- \frac{x^*}{\NM{x^*}{2}}\right).
\label{eq:opt2}
\end{align}
This condition plays the key to finding the proposed closed-form solution.
The following Lemma
shows 
$\sign{x^*_i}$.

\begin{lemma}
	\label{lem:samesign}
	If $x^*_i \neq 0$,
	$\sign{x^*_i} = \sign{z_i}$.
\end{lemma}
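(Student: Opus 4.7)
The plan is to extract the $i$th coordinate of the optimality condition \eqref{eq:opt2} and show directly that $z_i$ inherits the sign of $x^*_i$. Since $x^*_i \neq 0$, the subdifferential component $a_i \in \partial |x^*_i|$ collapses to the single value $\sign(x^*_i)$, so the inclusion becomes an equality
\begin{equation*}
z_i \;=\; x^*_i \;+\; \lambda\,\sign(x^*_i) \;-\; \lambda\,\frac{x^*_i}{\NM{x^*}{2}}.
\end{equation*}

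The first step is to regroup this expression by pulling out $\sign(x^*_i)$: writing $x^*_i = \sign(x^*_i)\,|x^*_i|$ yields
\begin{equation*}
z_i \;=\; \sign(x^*_i)\left(\,|x^*_i| \;+\; \lambda\left(1 - \frac{|x^*_i|}{\NM{x^*}{2}}\right)\right).
\end{equation*}

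The second step is to observe that the bracketed quantity is strictly positive. Indeed, $|x^*_i| > 0$ by assumption, $\lambda \ge 0$, and the elementary bound $|x^*_i| \le \NM{x^*}{2}$ gives $1 - |x^*_i|/\NM{x^*}{2} \ge 0$. Hence the bracket is at least $|x^*_i| > 0$, and therefore $z_i$ has exactly the sign of $\sign(x^*_i)$, i.e.\ $\sign(z_i) = \sign(x^*_i)$, as claimed.

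I do not anticipate a real obstacle: the only subtlety is that the coefficient $1 - \lambda/\NM{x^*}{2}$ in front of $x^*_i$ can be negative (when $\NM{x^*}{2} < \lambda$), which would muddle a naive sign argument. The regrouping above sidesteps this by combining the $x^*_i$ term with the $-\lambda x^*_i/\NM{x^*}{2}$ term before factoring out $\sign(x^*_i)$, so that the remaining scalar coefficient is manifestly nonnegative. This proof also confirms implicitly that $\NM{x^*}{2}$ does not vanish in the relevant case, since Lemma~\ref{lem:nzsol} has already ensured $x^* \neq 0$ whenever $z \neq 0$.
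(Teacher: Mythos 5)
Your proof is correct, but it takes a genuinely different route from the paper's. The paper proves Lemma~\ref{lem:samesign} by an exchange argument on the objective itself: it flips the sign of the $i$th coordinate of $x^*$, notes that $\NM{\cdot}{1}-\NM{\cdot}{2}$ is invariant under this flip, and argues that the quadratic term $\frac{1}{2}\NM{x-z}{2}^2$ strictly decreases unless $x^*_i$ and $z_i$ already agree in sign --- contradicting global optimality. You instead work from the stationarity condition \eqref{eq:opt2} (legitimately, since it is established via Lemma~\ref{lem:opt} before this lemma is stated, so there is no circularity), solve the $i$th component for $z_i$, and factor out $\sign{x^*_i}$ so that the remaining scalar $|x^*_i| + \lambda\left(1 - |x^*_i|/\NM{x^*}{2}\right)$ is manifestly at least $|x^*_i| > 0$; your remark about why one must not factor out $x^*_i$ against the possibly negative coefficient $1 - \lambda/\NM{x^*}{2}$ is exactly the right caution. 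The trade-offs: the paper's argument uses only the definition of a global minimizer and no subdifferential calculus, and it cleanly handles the fact that flipping a coordinate leaves the regularizer unchanged; your argument is shorter, avoids the somewhat awkward contradiction setup in the paper's write-up, and yields strictly more information --- namely $|z_i| \ge |x^*_i| + \lambda\left(1 - |x^*_i|/\NM{x^*}{2}\right) > 0$, so every nonzero coordinate of $x^*$ sits over a nonzero coordinate of $z$ with $|z_i|$ quantitatively bounded below --- which dovetails directly with the case analysis in the proof of Proposition~\ref{pr:proxsol}.
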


\begin{proof}
	We prove it by establishing contradiction.
	Constructing a $\hat{x}$ such that
	$\hat{x}_j =
	\begin{cases}
	x^*_j & j \neq i
	\\
	- x^*_j & j = i
	\end{cases}$,
	where $\hat{x}$ has only one non-zero dimension with opposite sign as $x^*$.	
	Assuming $\hat{x}$ achieves optimal.
	First, we have
	\begin{align}
	& \sum_{j \neq i} (\hat{x}_j - z_j)^2 + (\hat{x}_i - z_i)^2
	= \sum_{j \neq i} (x^*_j - z_j)^2 + (\hat{x}_i - z_i)^2
	\notag \\
	& \quad\quad\quad > \sum_{j \neq i} (x^*_j - z_j)^2 + (x^* - z_i)^2,
	\label{eq:temp1}
	\end{align}
	where the last inequality dues to $x^*_i$ has the same sign as $z_i$.
	Then, note that
	\begin{align}
	\NM{x^*}{1} - \NM{x^*}{2} = \NM{\hat{x}}{1} - \NM{\hat{x}}{2}.
	\label{eq:temp2}
	\end{align}
	Combing \eqref{eq:temp1} and \eqref{eq:temp2}, we must have
	\begin{align*}
	\phi(\hat{x})
	& = 
	\frac{1}{2}\NM{\hat{x} - z}{2}^2 
	+ \lambda\left( \NM{\hat{x}}{1} - \NM{\hat{x}}{2} \right)
	\\
	& > 
	\frac{1}{2}\NM{x^* - z}{2}^2 
	+ \lambda\left( \NM{x^*}{1} - \NM{x^*}{2} \right)
	= \phi(x^*),
	\end{align*}
	which is in contrast with the assumption that $\hat{x}$ is optimal.
	Thus, if $x_i^* \neq 0$,
	$x^*_i$ must have same sign as $z_i$.
\end{proof}

Then,
we obtain the following closed-form solution of (\ref{eq:prox}).

\begin{proposition} \label{pr:proxsol}
Let $w = [w_i] \in \R^d$, with $w_i = \sign{z_i} \text{max} \left( |z_i| - \lambda, 0 \right)$.
\begin{enumerate}
\item[(i).] If $w = 0$, then $x^*_j = z_j$ for
$j = \arg\max_{i = 1, \dots, d} |z_i|$, and
0  otherwise; 

\item[(ii).] If $w \neq 0$, $x^* = (1 + \frac{\lambda}{\NM{w}{2}} ) w$.
\end{enumerate}
\end{proposition}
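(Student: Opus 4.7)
My plan is to combine Lemmas~\ref{lem:nzsol}, \ref{lem:opt}, and \ref{lem:samesign} to turn (\ref{eq:prox}) into coordinate-wise scalar equations and then solve them by case analysis. The trivial case $z = 0$ is already handled by Lemma~\ref{lem:nzsol}, and both branches of the Proposition are consistent with $x^* = 0$ there (in~(i), $z_j = 0$ for any chosen $j$). So I assume $z \neq 0$, which by Lemma~\ref{lem:nzsol} guarantees $x^* \neq 0$ throughout.

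Reading condition~(\ref{eq:opt2}) component by component yields two rules. For each $i$ with $x_i^* \neq 0$, Lemma~\ref{lem:samesign} forces the subgradient entry $a_i = \sign{z_i}$, collapsing the inclusion to $x_i^*(1 - \lambda/\NM{x^*}{2}) = z_i - \lambda\sign{z_i}$; for each $i$ with $x_i^* = 0$, the constraint $a_i \in [-1,1]$ yields $|z_i| \le \lambda$. All that remains is to pin down $\NM{x^*}{2}$ and the support of $x^*$.

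For case~(ii), where $w \neq 0$, I would first argue $\NM{x^*}{2} > \lambda$. Otherwise $1 - \lambda/\NM{x^*}{2} \le 0$, which combined with $\sign{x_i^*} = \sign{z_i}$ confines the support to $\{i : |z_i| \le \lambda\}$; but then any $i^*$ with $w_{i^*} \neq 0$ (hence $|z_{i^*}| > \lambda$) is forced into the zero set, where the second rule demands $|z_{i^*}| \le \lambda$, a contradiction. With $\NM{x^*}{2} > \lambda$, the same sign argument pins the support to exactly $\{i : w_i \neq 0\}$ and gives $x_i^* = w_i/(1 - \lambda/\NM{x^*}{2})$ there. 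Squaring and summing yields $\NM{x^*}{2}^2 = \NM{w}{2}^2/(1 - \lambda/\NM{x^*}{2})^2$, which cleanly gives $\NM{x^*}{2} = \NM{w}{2} + \lambda$, and substituting back produces $x^* = (1 + \lambda/\NM{w}{2})\, w$.

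For case~(i), where $w = 0$ and hence $|z_i| \le \lambda$ for every $i$, verifying that $x^* = z_j e_j$ with $j = \arg\max_i |z_i|$ satisfies the KKT system is direct: at coordinate $j$, $\NM{x^*}{2} = |z_j|$ gives $z_j(1 - \lambda/|z_j|) = z_j - \lambda\sign{z_j}$, and for $i \neq j$ the condition $|z_i| \le |z_j| \le \lambda$ is automatic. The main obstacle I foresee is that the KKT conditions alone only certify criticality and, when $w = 0$, admit multiple critical-point candidates indexed by support, so global optimality needs a separate argument. My fallback is a one-dimensional reduction: writing any nonzero $x = ru$ with $r = \NM{x}{2} > 0$ and $\NM{u}{2} = 1$ gives $\phi(ru) = \tfrac12 r^2 - r(u^\top z - \lambda \NM{u}{1} + \lambda) + \tfrac12 \NM{z}{2}^2$, so optimizing in $r \ge 0$ reduces the problem to $\max_{\NM{u}{2}=1} u^\top z - \lambda \NM{u}{1}$. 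Sign-aligning $u$ with $z$ and applying Cauchy-Schwarz yields the maximum $\NM{w}{2}$ (attained at $u \propto w$) in case~(ii) and $|z_j| - \lambda$ (attained at $u = \sign{z_j}\, e_j$) in case~(i); tracing $x = r^* u^*$ back reproduces both claimed closed forms.
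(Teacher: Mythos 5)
Your primary line of attack is essentially the paper's own: invoke Lemmas~\ref{lem:nzsol}--\ref{lem:samesign} to reduce \eqref{eq:opt2} to the coordinate-wise system $x^*_i(1 - \lambda/\NM{x^*}{2}) = z_i - \lambda\sign{z_i}$ on the support and $|z_i|\le\lambda$ off it, then solve for $\NM{x^*}{2}$; your way of ruling out the root $\NM{x^*}{2}=\lambda - c$ (by showing $\NM{x^*}{2}>\lambda$ whenever $w\neq 0$, via the sign constraint forcing any $|z_i|>\lambda$ coordinate into the support) is a cleaner version of the paper's remark that $\lambda/\NM{\bar{x}}{2}$ ``must be smaller than $1$.'' Where you genuinely diverge is the fallback for case~(i): the paper handles $w=0$ by asserting that multi-support solutions of \eqref{eq:temp3} collapse to $x^*=0$ and that the single-support candidate must be at $j=\arg\max_i|z_i|$, which is the weakest step of their argument (stationarity alone admits several single-support candidates $z_ie_i$, and even some multi-support critical points when several $|z_i|$ are close to $\lambda$). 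Your polar reduction $\phi(ru)=\tfrac12 r^2 - r\left(u^{\top}z-\lambda\NM{u}{1}+\lambda\right)+\tfrac12\NM{z}{2}^2$ followed by maximizing $u^{\top}z-\lambda\NM{u}{1}$ on the unit sphere certifies \emph{global} optimality directly in both cases and subsumes the KKT bookkeeping entirely; it is the more rigorous route, at the small cost of having to justify that the sphere maximum equals $\NM{w}{2}$ (restrict mass to coordinates with $|z_i|>\lambda$ before applying Cauchy--Schwarz) versus $\max_i|z_i|-\lambda$. I see no gap: the proposal is correct and, on case~(i), actually tighter than the published proof.
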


\begin{proof}
As in Lemma~\ref{lem:samesign}, 
$x^* \neq 0$ and $x_i^*$ must have same sign with $z_i$, 
condition \eqref{eq:opt2} can be expressed as
\begin{align}
x^*_i - z_i + \lambda \sign{z_i} - \lambda \frac{x^*_i}{\NM{x}{2}} = 0, 
\quad & x^*_i \neq 0.
\label{eq:optnnz}
\\
- \lambda \le z_i \le \lambda, 
\quad & x^*_i = 0.
\label{eq:optzero}
\end{align}
Therefore, 
we can partition dimensions of $x$ into two set based on 
\eqref{eq:optnnz} (denoted as $\mathcal{N}$) and \eqref{eq:optzero} (denoted as $\mathcal{Z}$).
Let $\bar{x}$ be part of $x^*$ containing dimensions in $\mathcal{N}$ (accordingly for $\bar{z}$ from $z$), 
note that $\NM{\bar{x}}{2} = \NM{x}{2}$,
\eqref{eq:optnnz} can be expressed as
\begin{align}
\bar{x} \left( 1 - \frac{\lambda}{\NM{\bar{x}}{2}} \right) = \bar{z} - \lambda \sign{\bar{z}}.
\label{eq:temp3}
\end{align}
Then we can determine $\NM{\bar{x}}{2}$ from
$ \left| 1 - \frac{\lambda}{\NM{\bar{x}}{2}} \right|  \NM{\bar{x}}{2}
= \NM{\bar{z} - \lambda \sign{\bar{z}}}{2}$.
Let $c_z = \NM{\bar{z} - \lambda \sign{\bar{z}}}{2}$, then
\begin{align*}
\NM{\bar{x}}{2} = 
\begin{cases}
\lambda + c_z,
& c_z \ge \lambda
\\
\lambda - c_z \text{\;or\;} \lambda + c_z,
& \text{otherwise}
\end{cases}.
\end{align*}
However, as $\bar{x}_i$ must have same sign with $\bar{z}_i$,
this indicates $\lambda / \NM{\bar{x}}{2}$ in \eqref{eq:temp3} must be smaller than $1$.
Thus, $\NM{\bar{x}}{2} = \lambda + c_z$; and then the optimal solution can be expressed as
\begin{align}
x_i = \frac{c_z + \lambda}{c_z} 
\left[ \sign{z_i} \max\left( |z_i| - \lambda, 0 \right) \right].
\label{eq:temp4}
\end{align}
Note that if there exists dimensions in $z$ such that $|z_i| > \lambda$ holds,
it can be seen there is only one point satisfying the necessary conditions \eqref{eq:optnnz} and \eqref{eq:optzero}. 
Thus, this point must be the global optimal.
Then, 
if $z_i \le \lambda$ for all dimensions, \eqref{eq:optzero} shows $x^* = 0$,
which violates Lemma~\ref{lem:nzsol},
and we must have at least one non-zero dimension in $x^*$.
Let us consider following two cases
\begin{itemize}
	\item 
	If there is only one non-zero dimension, 
	then following Lemma~\ref{lem:nzsol},
	\begin{align}
	x^*_i = 
	\begin{cases}
	0, & i \neq j
	\\
	z_j, & i = j
	\end{cases}
	\;\text{where}\;
	j = \arg\max_{i = 1, \cdots, d} |z_i|.
	\label{eq:temp12}
	\end{align}
	
	\item If there are more than one non-zero dimensions,
	we will go back to \eqref{eq:temp3},
	which shows $x^* = 0$.
	Thus, we can only have one non-zero dimension.
\end{itemize}
Finally, when $z = 0$, then $x^* = 0$ and is included in \eqref{eq:temp12}.
The Proposition is then obtained from \eqref{eq:temp4} and \eqref{eq:temp12}.
\end{proof}

For the DCA algorithm,
%\footnote{$\surd$ $t_p$ appears at section III-A. *** do u know the complexity  of using dca? if not, just drop that part}
$O(d t_{p})$ time is needed to compute $x^*$
where $t_p$ is the number of iterations.
Instead,
using Proposition~\ref{pr:proxsol},
it takes only $O(d)$ time,
which can be much faster.
%\footnote{$\surd$ *** this para is not very useful and can be removed. the 
%	Assumption~\ref{ass:ncvx} is only on $f$ (those on $g$ (l1l2) are always satisfied?); also, 
%	there's no convergence rate in this noncvx case, so 
%	i dont see theoretical reasons why "nmAPG with our closed-form should be always faster than
%	with DCA approach"}

%For nmAPG (Algorithm~\ref{alg:nmAPG}),
%its Assumption~\ref{ass:ncvx} can be easily satisfied
%once $f$ is Lipschitz smooth and $\lim_{\NM{x}{2}\rightarrow \infty} f(x) = \infty$.
%One such example is the square loss, i.e., $f(x) = \frac{1}{2}\NM{x - y}{2}^2$.
%Then,
%no matter whether the proximal step is solved exactly using DCA (Algorithm~\ref{alg:dca}) 
%or by our closed-form solution (Proposition~\ref{pr:proxsol}),
%the convergence guarantee is the same as Theorem~\ref{thm:nmAPG}.
%It means the nmAPG with our closed-form should be always faster than with DCA approach.
%This is also empirically verified at Section~\ref{sec:syn}.

%%%%%%%%%%%%%%%%%%%%%%%%%%%%%%%%%%%%%%%%%%%%%%%%%%%%%%%%%%%%%%%%%%

\section{Extensions}
\label{sec:app}

In this section, 
we extend Proposition~\ref{pr:proxsol}
to
low-rank matrix learning
(Section~\ref{sec:matcomp}) and the total variation model (Section~\ref{sec:tvmdl}).

\subsection{Low-Rank Matrix Completion}
\label{sec:matcomp}

In low-rank matrix completion, one tries to recover a low-rank matrix from a small number of
observations \cite{candes2009exact}.  
%\footnote{$\surd$ *** def symbols}
Let matrix $O \in \R^{m \times n}$
with observed positions indicated by $\Omega \in \{ 0, 1 \}^{m \times n}$
such that $\Omega_{ij} = 1$ if $O_{ij}$ is observed and $0$ otherwise.
The matrix completion problem is formulated as
\begin{align}
\min_{X \in \R^{m \times n}} \frac{1}{2} \NM{\SO{\Omega}{X - O}}{F}^2
+ \lambda g(X),
\label{eq:matcomp}
\end{align}
where 
$\left[\SO{\Omega}{A}\right]_{ij} = A_{ij}$ if $\Omega_{ij} = 1$
and $0$ otherwise,
and $g$ is a low-rank regularizer.
%\footnote{$\surd$ *** examples of $g$.}
Two common choices of $g$ are the nuclear norm \cite{candes2009exact,cai2010singular} and rank constraint \cite{jain2010guaranteed,wen2012solving}.
Matrix completion has been successfully applied to many applications
such as recommender system \cite{jain2010guaranteed,wen2012solving} and image recovery \cite{lu2016nonconvex,wang2015orthogonal}.

Let the singular values of $X$ be $\sigma \equiv [\sigma_1, \dots, \sigma_m]$ 
(arranged in nonincreasing order).
Recall that the nuclear 
and Frobenious norms of $X$ 
are $\NM{X}{*} = \NM{\sigma}{1}$ 
and $\NM{X}{F} = \NM{\sigma}{2}$, respectively.
Thus, they can be viewed as the $\ell_1$- and $\ell_2$-norms
of the singular values \cite{candes2009exact}.
Based on this  observation, the $\DifReg$-regularizer has been recently extended to matrices as \cite{matruncated2016}:
%\footnote{$\surd$ *** did they use the same notation $\NM{X}{*\text{-}F}$?}
\begin{align}
\NM{X}{*\text{-}F} = \NM{X}{*} - \NM{X}{F}.
\label{eq:l12low}
\end{align}
Note that
$\NM{\cdot}{*\text{-}F}$
is nonconvex and nonsmooth.
%\footnote{$\surd$ rewrote *** \cite{matruncated2016} has not reported expt results? 
%	indeed, ur next sentence says this is not just "expected"}
In \cite{matruncated2016},
reliable recovery guarantee for matrix completion is provided with the use of
the $\NM{\cdot}{*\text{-}F}$ regularizer,
and better empirical performance than both norm regularization  and explicit rank constraint is also observed. 
However, 
DCA 
is still used 
in \cite{matruncated2016}, and is slow.

In this paper, 
we propose the use of the PG algorithm.
It has been demonstrated great success in low-rank learning with
%\footnote{$\surd$ *** mention those non-l1l2 regularizers} 
the nuclear norm regularization \cite{hsieh2014nuclear,qyao2015ijcai,mazumder2010spectral},
rank constraint \cite{jain2010guaranteed},
and some adaptive nonconvex regularization such as the LSP function
and capped $\ell_1$-norm \cite{qyao2015icdm,lu2016nonconvex}.
However,
it has not been used with the $\DifReg$ regularization yet.

The following Proposition shows that the 
proximal step 
associated with
$\NM{\cdot}{*\text{-}F}$ 
can also be efficiently computed.

\begin{proposition} \label{pr:proxsol2}
Let 
the SVD 
of a given $Z \in \R^{m \times n}$
be $U \Sigma V^{\top}$, 
$X^* = \Px{\lambda \NM{\cdot}{*\text{-}F}}{Z}$, and $w 
= [w_i] 
\in \R^m$ with $w_i = \text{max} \left( \sigma_i - \lambda, 0 \right)$. 
\begin{enumerate}
	\item[(i).] If $w = 0$, $X^* = \sigma_1 u_1 v_1^{\top}$;
	\item[(ii).]  If $w \neq 0$, $X^* = U \Diag{(1 + \frac{\lambda}{\NM{w}{2}}) w} V^{\top}$.
\end{enumerate}
\end{proposition}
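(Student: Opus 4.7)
My plan is to reduce the matrix proximal problem to the vector proximal problem already solved in Proposition~\ref{pr:proxsol}, by exploiting the fact that both the data-fitting term and the regularizer $\NM{\cdot}{*\text{-}F}$ are unitarily invariant functions of the singular values. The key tool is von Neumann's trace inequality, which states that for any $X,Z\in\R^{m\times n}$,
\begin{align*}
\Tr{X^{\top} Z} \le \sum_{i=1}^{m} \sigma_i(X)\,\sigma_i(Z),
\end{align*}
with equality attained if and only if $X$ and $Z$ admit a simultaneous singular value decomposition (i.e., share left and right singular vectors, with singular values sorted compatibly).

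Starting from the objective
\begin{align*}
\Phi(X) = \frac{1}{2}\NM{X-Z}{F}^2 + \lambda\bigl(\NM{X}{*}-\NM{X}{F}\bigr),
\end{align*}
I would expand $\NM{X-Z}{F}^2 = \NM{X}{F}^2 - 2\Tr{X^{\top}Z} + \NM{Z}{F}^2$ and note that $\NM{X}{F}$, $\NM{X}{*}$, and $\NM{X}{F}^2$ depend on $X$ only through its singular values $\sigma(X)=[\sigma_1(X),\dots,\sigma_m(X)]$. Combined with von Neumann, this gives the lower bound
\begin{align*}
\Phi(X) \ge \frac{1}{2}\NM{\sigma(X)-\sigma(Z)}{2}^2 + \lambda\bigl(\NM{\sigma(X)}{1}-\NM{\sigma(X)}{2}\bigr) + \text{const},
\end{align*}
and this bound is attained by choosing $X=U\Diag{\sigma(X)}V^{\top}$ with the same $U,V$ as in the SVD of $Z=U\Sigma V^{\top}$. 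Hence minimizing $\Phi$ reduces to the vector problem
\begin{align*}
\min_{s\in\R^m}\; \tfrac{1}{2}\NM{s-\sigma(Z)}{2}^2 + \lambda\bigl(\NM{s}{1}-\NM{s}{2}\bigr),
\end{align*}
which is exactly \eqref{eq:prox} with input $z=\sigma(Z)$.

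Applying Proposition~\ref{pr:proxsol} to $z=\sigma(Z)$ (whose entries are nonnegative, so $\sign{z_i}=1$ whenever $z_i>0$) produces the optimal singular values $s^*$: in case (i), $w=0$ yields $s^*_1=\sigma_1$ with other entries zero (since $\sigma_1$ is the largest), giving $X^*=\sigma_1 u_1 v_1^{\top}$; in case (ii), $s^*=(1+\lambda/\NM{w}{2})w$, giving the claimed $X^*=U\Diag{(1+\lambda/\NM{w}{2})w}V^{\top}$.

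The main obstacle is justifying that an optimizer sharing the SVD basis of $Z$ really exists and that the resulting singular value vector is admissible, namely nonnegative and sorted. Nonnegativity of $s^*$ follows because $\sigma(Z)\ge 0$ makes both case-(i) and case-(ii) outputs nonnegative, and the sorted-order property is preserved: the vector $w_i=\max(\sigma_i-\lambda,0)$ is nonincreasing when $\sigma(Z)$ is nonincreasing, and scaling by a positive constant in case (ii) preserves this order. A minor subtlety is nonuniqueness of $U,V$ when $Z$ has repeated singular values; but since $\Phi$ is invariant under the block-orthogonal freedom in the SVD, any valid choice yields a valid optimizer, and Proposition~\ref{pr:solext} guarantees the solution set is nonempty and compact so the construction produces a genuine minimizer.
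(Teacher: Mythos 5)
Your proposal is correct and follows essentially the same route as the paper's own proof: reduce the matrix problem to the vector problem on singular values via von Neumann's trace inequality (with equality when $X$ shares the SVD basis of $Z$), then invoke Proposition~\ref{pr:proxsol}. Your added remarks on nonnegativity, ordering, and the repeated-singular-value case are careful details the paper leaves implicit, but the argument is the same.
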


\begin{proof}
Let the SVD of $X$ be $\bar{U} \Diag{\bar{\sigma}} \bar{V}^{\top}$
where $\bar{\sigma} = [\bar{\sigma}_1, \dots, \bar{\sigma}_m]$, 
then \eqref{eq:l12low} becomes
\begin{align}
\arg\min_{X}
\frac{1}{2}\NM{X - Z}{F}^2 
+ \lambda\left( \NM{\bar{\sigma}}{1} - \NM{\bar{\sigma}}{2} \right).
\label{eq:temp8}
\end{align}
For the first term in \eqref{eq:temp8}, we have
\begin{align*}
\min_X \frac{1}{2}\NM{X - Z}{F}^2
= \min_X \frac{1}{2} \left( \NM{\sigma}{2}^2 + \NM{\bar{\sigma}}{2}^2 \right)
- \Tr{X^{\top} Z}.
\end{align*}
Note that $\Tr{X^{\top} Z} \le \sum_{i = 1}^m \sigma_i \bar{\sigma}_i$ and
the equality achieves only when $\bar{U} = U$ and $\bar{V} = V$ \cite{golub2012matrix}.
Thus, \eqref{eq:temp8} becomes 
\begin{align*}
\arg\min_{\bar{\sigma}}
\frac{1}{2}\NM{\bar{\sigma} - \sigma}{2}^2 
+ \lambda\left( \NM{\bar{\sigma}}{1} - \NM{\bar{\sigma}}{2} \right),
\end{align*}
where $\bar{\sigma}$ can be obtained from Proposition~\ref{pr:proxsol}.
\end{proof}

Given the SVD of $Z$, computing the proximal step takes only 
%\footnote{$\surd$ *** add}
$O(m)$ time.
However,
%Proposition~\ref{pr:proxsol2} requires the use of SVD.
a direct SVD computation takes $O(m^2 n)$ time.
%and can be expensive when $Z$ is large.
%For completion of a large rating matrix $O$ with only a few observations, 
As noted in
%\footnote{$\surd$ (i). here, the structure is used with acceleration;
%	(ii). I also add soft-impute.
%	*** this is not first noted by u?!} 
\cite{qyao2015ijcai,mazumder2010spectral}, during execution of the proximal
algorithm,
$Z$ is the sum of a sparse matrix and a low-rank matrix.
This can be used to speedup matrix multiplications in SVD computation, 
thus reducing the time
complexity to 
%\footnote{$\surd$ *** add}
$O( k^2(m + n) + k \NM{\Omega}{1} )$ time.
As a result, 
the proximal step takes only $O(  k^2(m + n) + k \NM{\Omega}{1} )$ time,
where $k \ll m$ is the desired rank of the output matrix.
%Such special structure is used in Section~\ref{sec:imgcomp} to speed up SVD computation.

\subsection{Total Variation Model}
\label{sec:tvmdl}

The total variation (TV) model has been commonly used in image processing \cite{osher2005AnIR}.
Let $x \in \R^{mn}$ be a vectorized image of size 
$m \times n$,
and $d = mn$.
The TV regularizer
is defined as
$\text{TV}(x)
= \NM{\D_h x}{1} + \NM{\D_v x}{1}$,
where $\D_h \in \R^{d \times d}$ and $\D_v \in \R^{d \times d}$ are the horizontal and vertical partial derivative operators,
respectively.
Recently, 
the $\DifReg$-regularizer has been extended to TV regularization
\cite{lou2015weighted}:
\begin{eqnarray} 
\TV{x}
& = & \NM{\D_h x}{1} + \NM{\D_v x}{1} \nonumber \\
&&-
\sum_{i = 1}^d \sqrt{ [\D_h x]_i^2 + [\D_v x]_i^2}.
\label{eq:newtv}
\end{eqnarray}
It has outperformed
standard $\ell_0$ and $\ell_1$-regularizers
on many tasks such as image denoising, image deblurring 
and MRI reconstruction \cite{lou2015weighted}.
Again, 
$\text{TV}_{1\text{-}2}$
is nonconvex and nonsmooth.

Given an input corrupted image $y$,
for above applications
the 
image $x$ 
can be 
recovered 
as 
\begin{align}
\min_{x} \frac{1}{2}\NM{A x - y}{2}^2 + \lambda \TV{x}.
\label{eq:tvmdl}
\end{align}
%\footnote{$\surd$ *** mention how this can be extended to other image tasks mentioned above}
where matrix $A$ depends on the specific choice of application.
We consider image denoising here so that $A = I$.
Again,
DCA is 
used in \cite{lou2015weighted}.

In the following, we propose a more efficient approach for learning with \eqref{eq:newtv}.
First, we rewrite (\ref{eq:newtv})
to a different form by extending
Proposition~\ref{pr:proxsol}.

\begin{lemma}
$\TV{x} = \NM{\mathcal{D}(x)}{1\text{-}(2,1)}$,
where $\mathcal{D}(x) \equiv [\D_h x, \D_v x]$, 
and $\NM{X}{1\text{-}(2,1)} \equiv \NM{X}{1} - \NM{X}{2,1}$.
\end{lemma}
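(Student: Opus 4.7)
The plan is a direct verification by unpacking definitions, since the claim is essentially a notational rewrite of $\TV{x}$ from \eqref{eq:newtv}. I would assemble $\mathcal{D}(x) = [\D_h x, \D_v x]$ into a matrix (with two columns, the horizontal and vertical partial derivatives of $x$) and identify each of the three summands in $\TV{x}$ with an appropriate matrix norm of $\mathcal{D}(x)$.

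First I would split the entry-wise $\ell_1$-norm column-by-column, obtaining
\begin{align*}
\NM{\mathcal{D}(x)}{1}
= \sum_{i=1}^d |[\D_h x]_i| + \sum_{i=1}^d |[\D_v x]_i|
= \NM{\D_h x}{1} + \NM{\D_v x}{1},
\end{align*}
which recovers the first two terms of $\TV{x}$.

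Next I would evaluate $\NM{\mathcal{D}(x)}{2,1}$ in the row-wise sense needed to match \eqref{eq:newtv}: the $i$th row of $\mathcal{D}(x)$ is $([\D_h x]_i, [\D_v x]_i)$, whose $\ell_2$-norm is $\sqrt{[\D_h x]_i^2 + [\D_v x]_i^2}$; summing over $i = 1, \dots, d$ reproduces exactly the subtracted term in \eqref{eq:newtv}. Combining the two identifications yields $\TV{x} = \NM{\mathcal{D}(x)}{1} - \NM{\mathcal{D}(x)}{2,1} = \NM{\mathcal{D}(x)}{1\text{-}(2,1)}$.

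The only real subtlety, and essentially the lone obstacle, is a bookkeeping one: the Notation section defined $\NM{X}{21}$ as a sum of the $\ell_2$-norms of the \emph{columns} of $X$, whereas the lemma requires the sum over \emph{rows} of a $d \times 2$ matrix. I would resolve this either by explicitly declaring the row-wise convention of $\NM{\cdot}{2,1}$ at the beginning of the proof, or equivalently by defining $\mathcal{D}(x)$ as the $2 \times d$ matrix whose $i$th column is $([\D_h x]_i, [\D_v x]_i)^{\top}$ so that the paper's original column-wise $(2,1)$-norm applies verbatim. Once this convention is fixed, the proof is a one-line identification and no further calculation is needed.
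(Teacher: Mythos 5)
Your proof is correct and takes essentially the same route as the paper's, which simply expands $\NM{\mathcal{D}(x)}{1\text{-}(2,1)}$ entrywise into $\sum_{i=1}^d \left| [\D_h x]_i \right| + \left| [\D_v x]_i \right| - \sqrt{[\D_h x]_i^2 + [\D_v x]_i^2}$ and matches it against \eqref{eq:newtv}. Your added observation that the $(2,1)$-norm must here be read row-wise on the $d \times 2$ matrix $[\D_h x, \D_v x]$ (in contrast to the column-wise $\NM{\cdot}{21}$ of the Notation section) is a genuine and worthwhile clarification that the paper's own one-line proof glosses over, but it does not change the substance of the argument.
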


\begin{proof}
We can write $\NM{\mathcal{D}(x)}{1\text{-}(2,1)}$ as
%\begin{align*}
%\NM{\mathcal{D}(x)}{1\text{-}(2,1)}
%\!=\! 
$\sum_i^d
\left| \left[ \D_h x \right]_i \right| 
+ \left| \left[ \D_v x \right]_i \right| 
\!-\! \sqrt{\left[ \D_h x \right]^2_i
\!+\! \left[ \D_v x \right]^2_i}$,
%\end{align*}
which is equivalent
	to \eqref{eq:newtv}.
\end{proof}

Instead of \eqref{eq:tvmdl},
we consider the optimization problem
\begin{equation}
\label{eq:temp9}
\min_{x, W} \frac{1}{2}\NM{x - y}{2}^2
+ \lambda \NM{W}{1\text{-}(2,1)}
+ \frac{\mu}{2}\NM{W - \mathcal{D}(x)}{F}^2,
\end{equation}
where $\mu>0$ is a penalty parameter (in the experiment, we simply set $\mu = 100 \lambda$).
Though \eqref{eq:temp9}
is slightly different from
(\ref{eq:tvmdl}), experiments 
in Section~\ref{sec:imgrec} show that 
%with a sufficiently large $\mu$,
they have comparable recovery performance.

The following shows that 
(\ref{eq:temp9})
can be 
efficiently solved by alternating minimization.
		  %\footnote{*** alt min requires some conditions for convergence. show that those conditions hold here}

\subsubsection{$x$ update}
At the $t$th iteration,
with a fixed $W_{t}$,
%\footnote{$\surd$ *** a $\mu$ is missing? same in \ref{eq:clsx}}
%\footnote{$\surd$ details are at Appendix~\ref{app:eqB} 
%	*** better add 1 (or 2) lines of derivation as u're not showing a proof}
\begin{eqnarray}
x_{t + 1} & = & \arg\min_{x} \frac{1}{2}\NM{x - y}{2}^2
+ \frac{\mu}{2}\NM{W_t - \mathcal{D}(x)}{F}^2
\notag \\
& = & B^{-1} ( y + \mu \D_h^{\top} w^h_t + \mu \D_v^{\top} w^v_t ), \label{eq:B}
\end{eqnarray}
where $B = \mu \D_h^{\top} \D_h + \mu \D_v^{\top} \D_v + I$, and $I$ is the identity matrix.
Directly inverting $B$ takes $O(d^3)$ time which is expensive.
%Thus,
%multiplication of 
%\footnote{*** what about $\D_h^{\top} \D_h$, $\D_v^{\top} \D_v$ and finaly $B$?}
Instead, 
%to obtain $x_{t+1}$,
we 
use conjugate gradient descent (CGD) to  solve:
\begin{align}
B x_{t+1} =  ( y + \mu \D_h^{\top} w^h_t + \mu \D_v^{\top} w^v_t ).
\label{eq:clsx}
\end{align}
In each CGD iteration,
the most expensive step is the multiplications of $B u$, where $u \in \R^d$.
This can be rewritten as
$B u = \mu \D_h^{\top} (\D_h u) + \mu \D_v^{\top} (\D_v u) + u$.
As $\D_v, \D_h$ are partial derivative operators,
for any vector $v \in \R^d$,
$\D_h v, \D_v v,
\D_h^{\top} v$ and $\D_v^{\top} v$
can be computed in $O(d)$ time.
Besides, we use $x_t$ to warm-start CGD on solving $x_{t + 1}$.
Due to the fast convergence of CGD both in theory and practice \cite{nocedal2006},
a few iterations are enough.
Thus, $x_{t+1}$ in (\ref{eq:clsx}) can be obtained in $O(d)$ time.

\subsubsection{$W$ update}

With a fixed $x_{t+1}$,
\begin{eqnarray} 
%\label{eq:tmp}
W_{t + 1} & = & \arg\min_{W} \frac{\mu}{2}\NM{W - \mathcal{D}(x_{t + 1})}{F}^2 + \lambda
\NM{W}{1\text{-}(2,1)} \nonumber\\
& = & \Px{\frac{\lambda}{\mu} \NM{\cdot}{1\text{-}(2,1)}}{\mathcal{D}(x_{t + 1})}.  \label{eq:proxtv}
\end{eqnarray} 
The following shows that 
this proximal step 
has a closed-form solution involving
$\Px{\lambda \NM{\cdot}{1\text{-}2}}{\cdot}$.
Recall that computing 
$\Px{\lambda \NM{\cdot}{1\text{-}2}}{\cdot}$ takes
$O(d)$ time, and
$W \in \R^{d \times 2}$. Hence,
$W_{t + 1}$
can also be obtained in $O(d)$ time.

\begin{proposition} \label{lem:proxTV}
Let $X^* = \Px{\lambda \NM{\cdot}{1\text{-}(2,1)}}{Z}$.
Then, 
$x^i = \Px{\lambda \NM{\cdot}{1\text{-}2}}{z^i}$, 
where  $x^i, z^i$ are the $i$th row of $X^*$ and $Z$, respectively.
\end{proposition}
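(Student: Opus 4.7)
The plan is to exploit separability of the objective across rows of $X$. Writing $X \in \R^{d\times 2}$ with $i$th row $x^i \in \R^2$ (and similarly $z^i$ for $Z$), I would first observe that both the data-fidelity term and the regularizer split into independent row-wise contributions, after which the claim reduces to applying Proposition~\ref{pr:proxsol} row by row.

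Concretely, the Frobenius norm splits as $\frac{1}{2}\NM{X - Z}{F}^2 = \sum_{i=1}^d \frac{1}{2}\NM{x^i - z^i}{2}^2$. For the regularizer, I would use the same rearrangement that appears in the proof of the preceding lemma (relating $\TV{x}$ to $\NM{\mathcal{D}(x)}{1\text{-}(2,1)}$): the $\ell_1$ part gives $\NM{X}{1} = \sum_i \NM{x^i}{1}$, while the $(2,1)$ part (as used in that lemma) is $\sum_i \NM{x^i}{2}$. Hence
\begin{align*}
\NM{X}{1\text{-}(2,1)} = \sum_{i=1}^d \bigl(\NM{x^i}{1} - \NM{x^i}{2}\bigr) = \sum_{i=1}^d \NM{x^i}{1\text{-}2}.
\end{align*}
Adding the two pieces, the proximal objective becomes
\begin{align*}
\sum_{i=1}^d \left[\frac{1}{2}\NM{x^i - z^i}{2}^2 + \lambda \NM{x^i}{1\text{-}2}\right],
\end{align*}
which is a sum of $d$ independent minimizations, one per row.

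Since each summand depends only on $x^i$, the joint minimizer $X^*$ is obtained row by row, and each row solves exactly the $\DifReg$ proximal problem \eqref{eq:prox} with input $z^i$. Therefore $x^{i*} = \Px{\lambda \NM{\cdot}{1\text{-}2}}{z^i}$, which is the claim; one can then invoke Proposition~\ref{pr:proxsol} (with $d=2$) to obtain an explicit formula if desired.

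There is no real obstacle here: the result is a straightforward separability argument. The only subtle point that needs to be handled carefully is making explicit that, in the notation of the preceding lemma, $\NM{\cdot}{2,1}$ is interpreted as a sum of row-$\ell_2$-norms (as is implicit in its use to express $\TV{x}$), so that the regularizer truly decouples across rows rather than across columns. Once that interpretation is fixed, the rest of the argument is one line of algebra plus an appeal to the definition of the proximal operator.
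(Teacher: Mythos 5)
Your proposal is correct and follows essentially the same route as the paper's own proof: split $\frac{1}{2}\NM{X - Z}{F}^2 + \lambda\NM{X}{1\text{-}(2,1)}$ into a sum of independent row-wise terms $\frac{1}{2}\NM{x^i - z^i}{2}^2 + \lambda(\NM{x^i}{1} - \NM{x^i}{2})$ and then invoke Proposition~\ref{pr:proxsol} on each row. Your explicit remark that $\NM{\cdot}{2,1}$ must here be read as a sum of row $\ell_2$-norms (consistent with its use in expressing $\TV{x}$, though the notation section defines it column-wise) is a point the paper glosses over, but otherwise the two arguments coincide.
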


\begin{proof}
We can write the proximal step as
\begin{align*}
& \Px{\lambda \NM{\cdot}{1\text{-}2}}{Z}\\
& = \arg\min_{X} \frac{1}{2}\NM{X - Z}{F}^2 
+ \lambda\left( \NM{X}{1} - \NM{X}{21} \right) 
\\
& = \arg \min_{\{ x^i \}}
\sum_{i = 1}^d \frac{1}{2} \NM{x^i - z^i}{2}^2
+ \lambda \left(  \NM{x^i}{1} - \NM{x^i}{2} \right).
\end{align*}
Note that in the last line, minimization w.r.t $x^i$'s are independent with each other,
and its optimal solution is given by Proposition~\ref{pr:proxsol}.
\end{proof}

%\footnote{$\surd$ $W \in \R^{d \times 2}$ *** is it? each row takes $O(d)$ already.
%	u hv $d$ rows? also, time to compute
%	$W - \mathcal{D}(x_{t + 1})$?}

In summary, each 
iteration 
of the alternating minimization algorithm
takes only $O(d)$ time,  
thus is very efficient.
The whole procedure is shown in Algorithm~\ref{alg:altmin}.
%\footnote{$\surd$ *** caption: Alternating minimization for ...}
Its convergence has been shown for problems of the
form $\min_{x, W} h(x,W) \equiv f(x, W) + g(W) + r(x)$,
where $f$ is Lipschitz-smooth, $g, r$ are proper and lower semicontinuous,
and $\inf_{x,W} h > -\infty$ \cite{attouch2010proximal}.
It is easy to see that 
these assumptions hold
for \eqref{eq:temp9}.

\begin{algorithm}[ht]
\caption{Alternating minimization for \eqref{eq:temp9} (AltMin).}
\begin{algorithmic}[1]
\STATE Initialize $W_1 = 0$;
\FOR{$t = 1, \dots, T$}
\STATE
compute $x_{t + 1}$ from \eqref{eq:clsx} using CGD; // $x$ update

\STATE $W_{t + 1} = \Px{\frac{\lambda}{\mu} \NM{\cdot}{1\text{-}(2,1)}}{\mathcal{D}(x_{t +
1})}$ using 
Proposition~\ref{lem:proxTV}; // $W$ update
\ENDFOR
\RETURN $x_{T + 1}$.
\end{algorithmic}
\label{alg:altmin}
\end{algorithm}

Alternatively,
one may 
rewrite \eqref{eq:tvmdl} 
equivalently as
%\footnote{$\surd$ *** to avoid confusion, dont use $W$}
\begin{align*}
\min_{x} \frac{1}{2}\NM{x - y}{2}^2 + \lambda\NM{[w^h, w^v]}{1\text{-}(2,1)} :
\begin{bmatrix}
w^h \\ w^v
\end{bmatrix}
=  \begin{bmatrix}
\D_h \\ \D_v
\end{bmatrix} x,
\end{align*}
and then solve it with ADMM \cite{boyd2011distributed}.
However,
as $[\D_h, \D_v]^{\top} \in \R^{2d \times d}$,
the full row-rank assumption in
\cite{li2015global}
fails,
thus 
ADMM 
may not converge.
%\footnote{$???$ Yeah... previously I've a try, so I do not show it...
%	*** why not give it a try, if it's not difficult ??? No. *** maybe it still works empirically. hv u tried?}

%%%%%%%%%%%%%%%%%%%%%%%%%%%%%%%%%%%%%%%%%%%%%%%%%%%%%%%%%%%%%%%%%%

\section{Experiments}
\label{sec:expts}

In this section, we perform experiments on both synthetic (Section~\ref{sec:syn}) and
real-world data sets (Sections~\ref{sec:imgcomp} and \ref{sec:imgrec}) in a number of
applications.

\begin{table*}[ht]
	\centering
	\caption{CPU time (in seconds) on the compressed sensing data set, with
		$\lambda = 0.01 \times {0.25}^{i} $.
		The fastest and comparable algorithms (according to the pairwise t-test with 95\%
		confidence) are highlighted.}
	\vspace{-8px}
	\begin{tabular}{cc | c | c| c | c | c} \hline
		\multicolumn{2}{c|}{CPU time (sec)}& $i$ = 0              & $i$ = 1              & $i$ = 2              & $i$ = 3              & $i$ = 4               \\ \hline
		\multicolumn{2}{c|}{DCA}                  & 1.9$\pm$0.2          & 4.6$\pm$0.5          & 13.8$\pm$1.3         & 51.2$\pm$4.9         & 193.9$\pm$21.1        \\ \hline
		\multicolumn{2}{c|}{SCP}                  & 2.1$\pm$0.2          & 8.0$\pm$1.1          & 32.1$\pm$4.1         & 120.6$\pm$12.7       & 335.6$\pm$28.3        \\ \hline
		\multirow{2}{*}{nmAPG} &            numerical             & 1.5$\pm$0.1          & 3.0$\pm$0.1          & 7.1$\pm$0.3          & 12.6$\pm$0.5         & 24.2$\pm$0.9          \\ \cline{2-7}
		&           closed-form            & \textbf{0.9$\pm$0.1} & \textbf{1.6$\pm$0.1} & \textbf{3.5$\pm$0.3} & \textbf{6.2$\pm$0.4} & \textbf{11.7$\pm$0.7} \\ \hline\hline
		\multicolumn{2}{c|}{FISTA}                 & \textbf{0.9$\pm$0.2} & 2.1$\pm$0.5        & 7.7$\pm$1.6         & 28.9$\pm$5.5          & 72.1$\pm$10.3           \\ \hline
	\end{tabular}
	\label{tab:synspeed:time}
\end{table*}

\begin{table*}[ht]
	\centering
	\caption{Recovered RMSE (scaled by ${10}^{-2}$) on the compressed sensing data set,
		with
		$\lambda = 0.01 \times {0.25}^{i} $.
		The lowest and comparable algorithms (according to the pairwise t-test with 95\%
		confidence) are highlighted.}
	\vspace{-8px}
	\begin{tabular}{cc | c | c| c | c | c}
		\hline
		      \multicolumn{2}{c|}{RMSE ($\times {10}^{-2}$)}         & $i$ = 0                 & $i$ = 1                & $i$ = 2                & $i$ = 3                & $i$ = 4                \\ \hline
		      \multicolumn{2}{c|}{DCA}       & 15.54$\pm$1.89          & 5.01$\pm$0.65          & \textbf{3.83$\pm$0.54} & 4.52$\pm$1.15          & 5.21$\pm$2.35          \\ \hline
		      \multicolumn{2}{c|}{SCP}       & 31.28$\pm$6.59          & 10.46$\pm$4.19         & 7.04$\pm$2.41          & 7.95$\pm$3.88          & 14.39$\pm$11.75        \\ \hline
		\multirow{2}{*}{nmAPG} &  numerical  & \textbf{15.22$\pm$1.73} & \textbf{4.85$\pm$0.62} & \textbf{3.82$\pm$0.72} & \textbf{4.01$\pm$0.35} & \textbf{4.12$\pm$0.35} \\ \cline{2-7}
		                       & closed-form & \textbf{15.29$\pm$1.28} & \textbf{4.85$\pm$0.60} & \textbf{3.78$\pm$0.37} & \textbf{4.00$\pm$0.38} & \textbf{4.11$\pm$0.39} \\ \hline\hline
		     \multicolumn{2}{c|}{FISTA}      & 20.42$\pm$3.88          & 6.55$\pm$1.31          & 4.82$\pm$0.88          & 5.74$\pm$2.04          & 13.98$\pm$12.16        \\ \hline
	\end{tabular}
	\label{tab:synspeed:rmse}
\end{table*}

\begin{figure*}[ht]
	\centering
	\subfigure[$\lambda = 0.01 \times {0.25}^{0}$.]
	{\includegraphics[width = 0.32\textwidth]{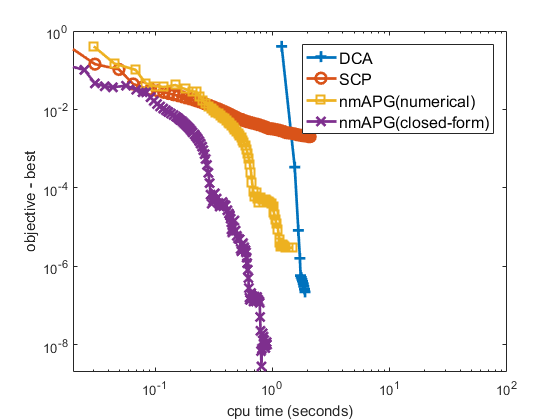}}
	%\subfigure[$\lambda = 0.01 \times {0.25}^{1}$.]
	%{\includegraphics[width = 0.32\textwidth]{images/syn-speed/1}}
	\subfigure[$\lambda = 0.01 \times {0.25}^{2}$.]
	{\includegraphics[width = 0.32\textwidth]{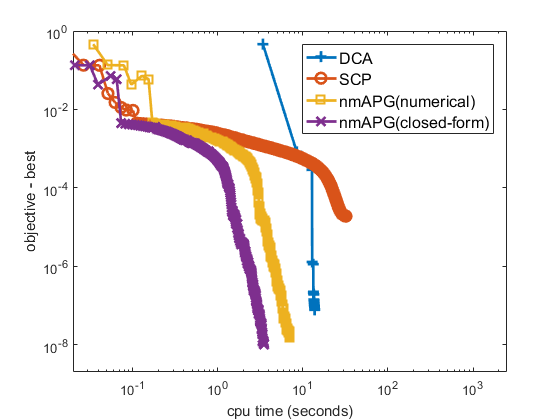}}
	%\subfigure[$\lambda = 0.01 \times {0.25}^{3}$.]
	%{\includegraphics[width = 0.32\textwidth]{images/syn-speed/3}}
	\subfigure[$\lambda = 0.01 \times {0.25}^{4}$.]
	{\includegraphics[width = 0.32\textwidth]{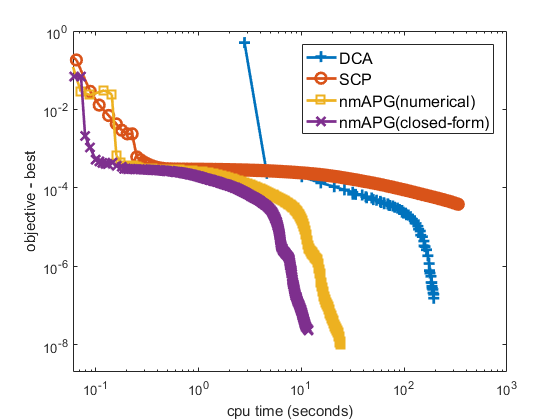}}
	
	%\vspace{-5px}
	\caption{Objective vs CPU time (in seconds) on the compressed sensing data set.
		Due to lack of space the figures for $\lambda = 0.01 \times 0.25^{1}$ and $\lambda = 0.01 \times 0.25^{3}$ are not shown.}
	\label{fig:synspeed:time}
	\vspace{-10px}
\end{figure*}

\subsection{Compressed Sensing}
\label{sec:syn}

Compressed sensing can be formulated as the following optimization problem 
\cite{yin2015minimization}:
\begin{equation} \label{eq:cs}
\min_{x} \frac{1}{2}\NM{y - A x}{2}^2 + \lambda ( \NM{x}{1} - \NM{x}{2} ),
\end{equation}
where 
$A$ is the dictionary,
%\footnote{$\surd$ *** u should def $x$ near (\ref{eq:cs})}
$x$ is the sparse vector to be recovered,
and 
$y$ 
is the input noisy signal.
%\footnote{$\surd$ *** add at least one more sentence to explain what (\ref{eq:cs}) is trying to do}
The task here is to recover the underneath sparse vector $x$ from a noisy measurement $y$ based on the given dictionary $A$.

%\footnote{$\surd$ *** mention that the setup is similar to ...}
We follow the setup in \cite{yin2015minimization}.
The data are generated as
$y = A \tilde{x} + \epsilon$,
where  $\tilde{x} \in \R^{4d}$ is sparse (with only $5\%$ of its entries nonzero, which are sampled i.i.d.
from the normal distribution
$\mathcal{N}(0, 1)$), 
$A \in \R^{d \times 4d}$ 
is an oversampled partial DCT matrix
(each element in its $i$th column $a_i$ is obtained as
$\frac{1}{\sqrt{{d}}} \cos\left( \frac{2 i \pi \varepsilon}{20} \right)$, where $\varepsilon$ is sampled from
the uniform distribution
$\mathcal{U}(0,1)$), and $\epsilon \in \R^{d}$ is the random noise sampled from $\mathcal{N}(0, 0.01)$.
Note that the dictionary $A$ is ill-conditioned \cite{yin2015minimization}.
For performance evaluation,
we use the normalized root-mean-squared error (RMSE):
$\frac{\NM{x - \tilde{x}}{2}}{\NM{x}{2}}$.
We set $d = 500$ and $\lambda \in 0.01 \times \{ 0.25^{0}, 0.25^{1}, \dots, 0.25^{4} \}$.  
The experiment is repeated $10$ times.

The following
algorithms are compared:
\begin{enumerate}
	\item 
	difference of convex  algorithm
	(DCA) \cite{yin2015minimization}
	(Algorithm~\ref{alg:dca}),
	and the subproblem is solved with ADMM \cite{boyd2011distributed};
	%a convex approximation to \eqref{eq:cs} needs to be exactly solved;
	
	\item Sequential convex programming (SCP) \cite{zhaosong2012} (Algorithm~\ref{alg:scp});
%	which is a more efficient variant of DCA algorithm; and
	
	\item nmAPG 
	%(Algorithm~\ref{alg:nmAPG}) 
	\cite{li2015accelerated}: The proximal step of the
	$\DifReg$-regularizer
	is computed in two ways: (i)
	numerically 
%	\footnote{$\surd$ yes *** did u solve the prox step using that in sec3a?
%		if not, should try *** again, did u use that in sec3a?}
	using DCA (Algorithm~\ref{alg:dca} with warm-start) as Section~\ref{sec:dcaprox}
	(denoted ``nmAPG(numerical)'');
	and (ii) exactly
	using the closed-form solution in 
	Proposition~\ref{pr:proxsol}
	(denoted ``nmAPG(closed-form)'').
\end{enumerate}
As an additional baseline,
we also perform
$\ell_1$-norm regularization 
using FISTA \cite{beck2009fast}.
%\footnote{??? (i) the condition is different for each experiment
%	(ii) for synthetic data,
%	I use more strict condition.
%	(iii) so I do not write in the main text.
%	*** mention that in the main text. the crition is the same in all the expts?
%	+++ The stop criterion is the same for all algorithms,
%	i.e., relative change on objective value is smaller than $10^{-8}$.}

Table~\ref{tab:synspeed:time} shows the timing results of different algorithms.
As can be seen, DCA is the slowest;
nmAPG(closed-form) is always faster than nmAPG(numerical)
as the proximal step does not need to be solved iteratively.
A more detailed comparison on convergence of the objective is shown in Figure~\ref{fig:synspeed:time}.
We can see that nmAPG(closed-form) converges very quickly,
and it is even faster than 
FISTA for the convex problem.  
However, DCA and SCP suffer from slow convergence,
and SCP converges with a larger objective value.
Table~\ref{tab:synspeed:rmse} shows the recovery performance.
As can be seen, $\DifReg$ regularization consistently achieves lower RMSE than the convex
$\ell_1$-regularizer.

\begin{table*}[ht]
	\centering
	\caption{Performance on the image completion problem,
		the RMSE is scaled by $10^{-2}$.
		The best and comparable results (according to the pairwise t-test with 95\% confidence) are highlighted.}
	\label{tab:imgcomp}
	\vspace{-8px}
	\begin{tabular}{c|c|c|c|c|c|c|c|c}
		\hline
		\multicolumn{2}{c|}{}                & \multicolumn{2}{c|}{nuclear norm regularizer} &         \multicolumn{3}{c|}{factorization approaches}         & \multicolumn{2}{c}{nonconvex regularizer} \\
		\multicolumn{2}{c|}{}                &  AIS-Impute   &     SktechCG      &       ARSS-M3F       &    LMaFit     &        ER1MP         &     FaNCL     &           nmAPG           \\ \hline
		\multirow{2}{*}{\textit{Mountain}} &      RMSE      & 1.75$\pm$0.02 &   2.53$\pm$0.62   &    2.44$\pm$0.02     & 2.41$\pm$0.02 &    2.55$\pm$0.03     & 1.57$\pm$0.01 &  \textbf{1.41$\pm$0.02}   \\ \cline{2-9}
		& CPU time (sec) &  3.7$\pm$0.1  &    5.8$\pm$0.1    & \textbf{1.5$\pm$0.2} &  6.2$\pm$0.1  & \textbf{1.6$\pm$0.1} & 41.2$\pm$0.7  &        7.0$\pm$0.3        \\ \hline
		\multirow{2}{*}{\textit{Windows}}  &      RMSE      & 1.84$\pm$0.03 &   2.77$\pm$0.43   &    2.48$\pm$0.01     & 2.46$\pm$0.01 &    2.77$\pm$0.10     & 1.67$\pm$0.06 &  \textbf{1.54$\pm$0.06}   \\ \cline{2-9}
		& CPU time (sec) &  4.4$\pm$0.1  &    5.6$\pm$0.1    & \textbf{1.4$\pm$0.2} &  5.7$\pm$0.3  & \textbf{1.5$\pm$0.1} & 60.9$\pm$1.2  &        9.2$\pm$0.2        \\ \hline
		\multirow{2}{*}{\textit{Sea}}    &      RMSE      & 0.98$\pm$0.01 &   1.62$\pm$0.13   &    1.65$\pm$0.10     & 1.36$\pm$0.01 &    1.49$\pm$0.02     & 0.95$\pm$0.01 &  \textbf{0.81$\pm$0.01}   \\ \cline{2-9}
		& CPU time (sec) &  3.8$\pm$0.1  &    5.5$\pm$0.1    & \textbf{0.4$\pm$0.1} &  7.7$\pm$0.4  &     1.5$\pm$0.1      & 40.4$\pm$1.2  &        7.6$\pm$0.1        \\ \hline
	\end{tabular}
\end{table*}

\begin{figure*}[ht]
	\centering
	
	\includegraphics[height = 0.12\textwidth]{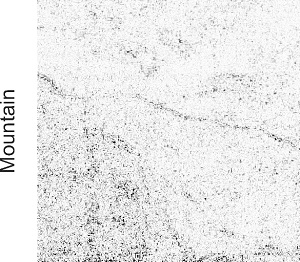}
	\includegraphics[height = 0.12\textwidth]{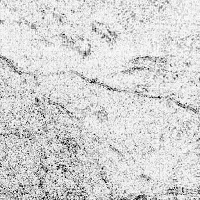}
	\includegraphics[height = 0.12\textwidth]{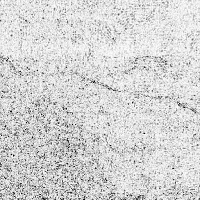}
	\includegraphics[height = 0.12\textwidth]{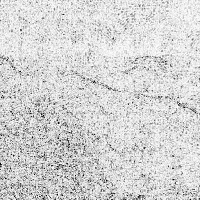}
	\includegraphics[height = 0.12\textwidth]{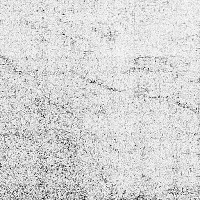}
	\includegraphics[height = 0.12\textwidth]{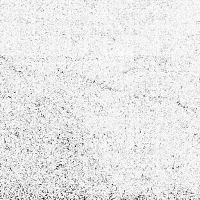}
	\includegraphics[height = 0.12\textwidth]{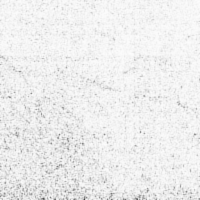}
	
	\vspace{6px}
	
	\includegraphics[height = 0.12\textwidth]{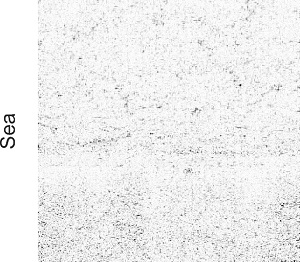}
	\includegraphics[height = 0.12\textwidth]{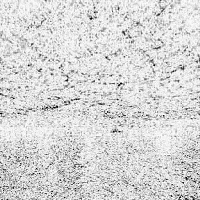}
	\includegraphics[height = 0.12\textwidth]{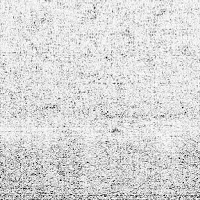}
	\includegraphics[height = 0.12\textwidth]{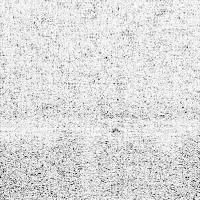}
	\includegraphics[height = 0.12\textwidth]{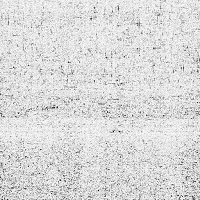}
	\includegraphics[height = 0.12\textwidth]{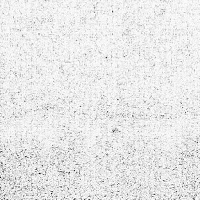}
	\includegraphics[height = 0.12\textwidth]{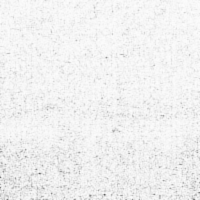}

	\subfigure[AIS-Impute.]
	{\includegraphics[height = 0.12\textwidth]{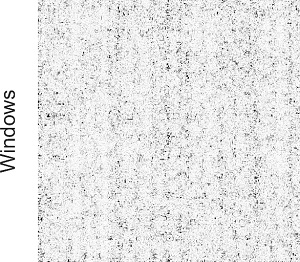}}
	\subfigure[SktechCG.]
	{{\includegraphics[height = 0.12\textwidth]{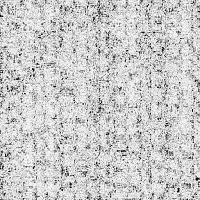}}}
	\subfigure[ARSS-M3F.]
	{\includegraphics[height = 0.12\textwidth]{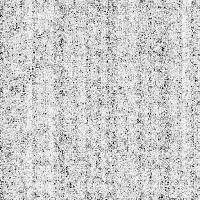}}
	\subfigure[LMaFit.]
	{\includegraphics[height = 0.12\textwidth]{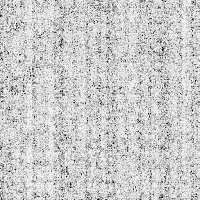}}
	\subfigure[ER1MP.]
	{\includegraphics[height = 0.12\textwidth]{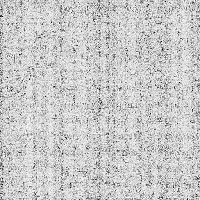}}
	\subfigure[FaNCL.]
	{\includegraphics[height = 0.12\textwidth]{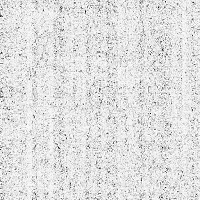}}
	\subfigure[nmAPG.]
	{\includegraphics[height = 0.12\textwidth]{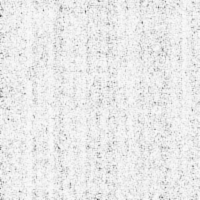}}
	
	%\vspace{-5px}
	\caption{Difference of recovered images from different algorithms to the clean images.
	The images ``\textit{Mountain}'', ``\textit{Sea}'' and ``\textit{Windows}'' are at the first, second and third row respectively. 
	The darker pixels indicate larger difference.}
	\label{fig:imgcomp}
	%\vspace{-10px}
\end{figure*}

\subsection{Image Completion}
\label{sec:imgcomp}

In this section,
we perform experiments on the matrix completion problem \eqref{eq:matcomp}.
We use 
three 
$512 \times 512$
gray-scale images (Figure~\ref{fig:clnimg}) from \cite{lu2016nonconvex}.
The pixel values are normalized to $[0, 1]$.
Following the setup in \cite{lu2016nonconvex,wang2015orthogonal},
we randomly sample $50\%$ of the pixels as observations.
%and the task is to recover the original image.
For performance evaluation,
we use the root mean square error (RMSE) \cite{lu2016nonconvex} 
$\sqrt{\frac{1}{512^2} \NM{X - O}{F}^2} $,
where $O$ is the target image,
and $X$ is the recovered image.

\begin{figure}[ht]
\centering
\subfigure[\textit{Mountain}.]
{\includegraphics[width = 0.24\columnwidth]{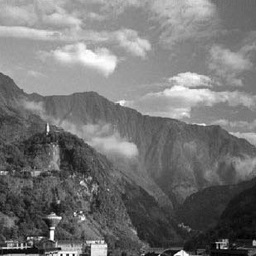}}
\quad
\subfigure[\textit{Sea}.]
{\includegraphics[width = 0.24\columnwidth]{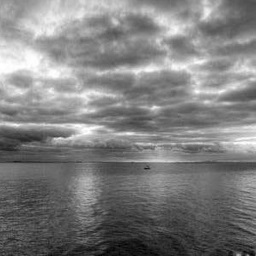}}
\quad
\subfigure[\textit{Windows}.]
{\includegraphics[width = 0.24\columnwidth]{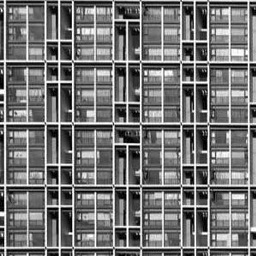}}

\vspace{-5px}
\caption{Clean images used for image completion.}
\label{fig:clnimg}
\end{figure}

We compare three types of algorithms:
nuclear norm regularization,
the factorization approach and nonconvex regularization.
For (convex) nuclear norm regularization,
we use 
\begin{enumerate}
\item AIS-Impute \cite{qyao2015ijcai}:
An inexact and accelerated proximal gradient algorithm; and
\item SketchCG \cite{yurtsever2017sketchy}: An efficient 
Frank-Wolfe 
variant 
with cheap iteration and low memory costs.
\end{enumerate}

For the factorization approach,
we use
\begin{enumerate}
\item ARSS-M3F \cite{yan2015scalable}, which is
	based on Riemannian manifold optimization; 
	\item LMaFit \cite{wen2012solving}, which
	factorizes 
	$X$ 
	as a product of two low-rank
	matrices, and then use alternating gradient descent for
	optimization; and
	\item ER1MP \cite{wang2015orthogonal}: 
	A greedy algorithm which increases the rank of the estimated matrix by one in each iteration.
	As suggested in \cite{wang2015orthogonal}, its economical version is used.
	%Moreover, as it overfits easily, early stopping is used;
\end{enumerate}

Finally,
for the nonconvex regularization,
we use
\begin{enumerate}
	\item FaNCL \cite{qyao2015icdm}: 
	The state-of-the-art solver for matrix completion with nonconvex regularizers.
	It is based on 
	an efficient proximal gradient algorithm.
	%for learning with adaptive nonconvex low-rank regularizers.
	Here, we use
	the log-sum-penalty regularizer (LSP) \cite{candes2008enhancing}, as
	it has the best reported performance in \cite{qyao2015icdm,lu2016nonconvex}; and
	
	\item nmAPG: proximal step is  computed in closed-form based on Proposition~\ref{pr:proxsol2},
	and the special structure on $Z$ (discussed at Section~\ref{sec:matcomp})
	is used to fast computation of its SVD.
\end{enumerate}
%We do not compare with IRLS \cite{fornasier2011low}, IRNN \cite{lu2016nonconvex}, SVP \cite{jain2010guaranteed} and active subspace selection \cite{hsieh2014nuclear} as they are too slow or inferior to FaNCL \cite{qyao2015icdm,qyao2015ijcai}. 

DCA \cite{matruncated2016} is not compared,
%(with the closed-form proximal step) 
as nmAPG 
is much faster
(Section~\ref{sec:syn}).
The experiment is repeated 5 times.
On parameter tuning,
%\footnote{??? the setup of parameters are the same as previous paper,
%	but I prefer not to mention so much details as it is easy for reviewers to raise questions.
%	*** how did u set regularization parm?}
we set $\lambda = 0.1 \max_{i,j} \left|  \left[ S_{\Omega}(O) \right]_{ij} \right| $ in \eqref{eq:matcomp} for convex nuclear norm regularization and adaptive nonconvex regularization,
and then we set the rank as $200$ for factorization approaches.
These follow the suggestions in \cite{lu2016nonconvex,wang2015orthogonal}.

%\footnote{+++ experiments on recommender system are hard to tune, so I replace it with images.}
Table~\ref{tab:imgcomp} shows
the 
recovered RMSE
and 
running time of different algorithms.
As can be seen, factorization approaches (ARSS-M3F, LMaFit and ER1MP) are fast,
but their recovery performances are much inferior
%\footnote{??? (i) I want to exactly set the rank as \cite{lu2016nonconvex} (Section IV.A and IV.B there).  They mentioned that how parameters of nuclear norm and nonconvex regularization methods can be set, but they mention nothing about factorization method.  (ii) As a result, I need to seek support from other papers.  Since ER1MP \cite{wang2015orthogonal} is a factorization based method and also considers image completion problem, I use their suggestions (Section 7.3 there).  (iii) The performance of factorization methods can be slightly better.  *** is that because the rank chosen is not correct? how did u choose?} 
to AIS-Impute which is based on the nuclear norm,
and FaNCL as well as nmAPG which are based on nonconvex regularizers.
SktechCG shares the same optimization problem as AIS-Impute,
but its recovery performance is not as good as AIS-Impute
as it is based on the Frank-Wolfe algorithm which suffers from slow convergence.
nmAPG achieves the lowest RMSE on all images.
The difference between recovered images and the clean ones 
%\footnote{$\surd$ *** those are not recovered images... be careful}
are shown in Figure~\ref{fig:imgcomp}.
As can be seen, the image quality recovered from nmAPG is better than others.

\subsection{Image Denoising}
\label{sec:imgrec}

%\begin{figure*}
%\centering
%\caption{Testing RMSE v.s $\lambda$ for AltMin and DCA algorithm.}
%\label{fig:tv:rmse}
%\end{figure*}

\begin{figure*}
\centering
\includegraphics[width = 0.32\textwidth]{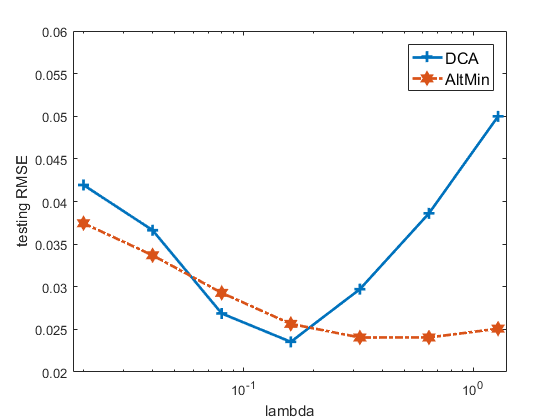}
\includegraphics[width = 0.32\textwidth]{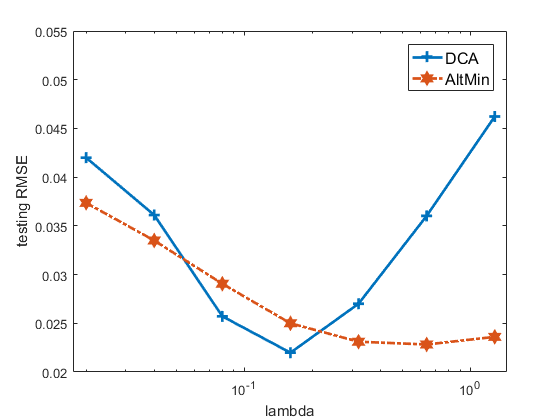}
\includegraphics[width = 0.32\textwidth]{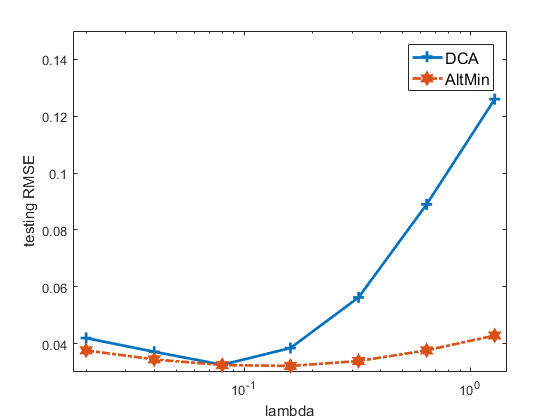}

\subfigure[\textit{Mountain}.]
{\includegraphics[width = 0.32\textwidth]{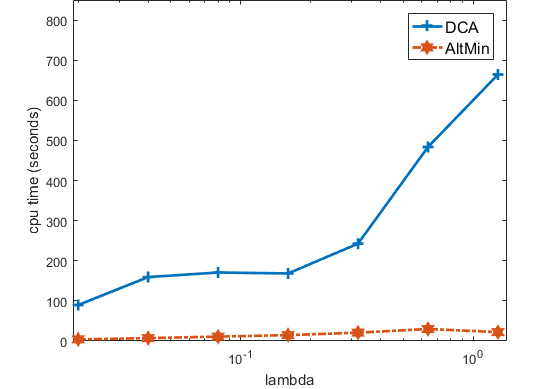}}
\subfigure[\textit{Sea}.]
{\includegraphics[width = 0.32\textwidth]{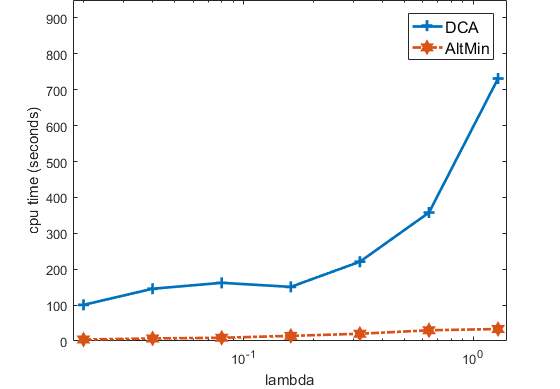}}
\subfigure[\textit{Windows}.]
{\includegraphics[width = 0.32\textwidth]{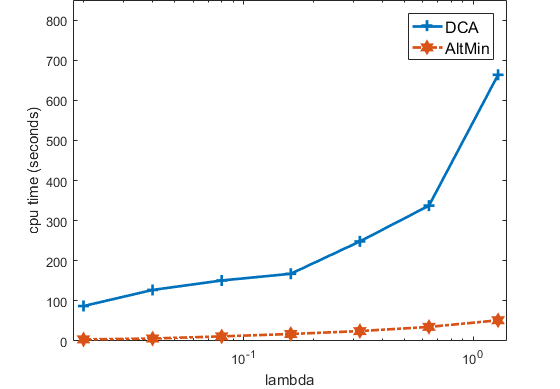}}

%\vspace{-5px}
\caption{Testing RMSE v.s $\lambda$ (first row) and CPU time (in seconds) v.s $\lambda$ (second row) for AltMin and DCA algorithm.}
\label{fig:tv}
% \vspace{-10px}
\end{figure*}

In this experiment, we use the total variation model for image denoising. 
The images in Figure~\ref{fig:clnimg} are used, 
and again pixels are normalized to $[0, 1]$.
Gaussian noise $\mathcal{N}(0, 0.05)$ is
added.
We compare the proposed solver AltMin (Algorithm~\ref{alg:altmin}) with DCA \cite{lou2015weighted}.
We do not compare with ADMM, 
as it does not have convergence guarantee as discussed in Section~\ref{sec:tvmdl}.
%\footnote{??? (i) piratically, given a set of images, we can split all images into training and testing set, we can pick up the $\lambda$ achieve the lowest RMSE, then use the same $\lambda$ on test set and report the performance.  (ii) we do the same for our IJCAI-2017 paper (Section 5.1); (iii) here we do not have testing set, and show performance with various $\lambda$ can better illustrate fast speed of our algorithm.  *** why dont u fix a $\lambda$?  if u do it this way, the next q is: how to set $\lambda$ in practice? }
As this is a transductive problem with no validation set,
we vary $\lambda$ as $0.02 \times \{ 1, 2, \dots, 64  \}$.
The experiment is repeated $5$ times.
Figure~\ref{fig:tv} shows the RMSE and CPU time.
More detailed comparisons on recovered images are in
Figure~\ref{fig:tv:mountain}.
As can be seen, the proposed algorithm
yields comparable recovery performance as
DCA, but is about $20$ to $30$ times faster.

\begin{figure*}[ht]
\centering

\includegraphics[height = 0.12\textwidth]{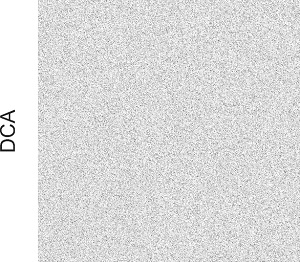}
\includegraphics[height = 0.12\textwidth]{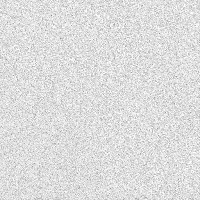}
\includegraphics[height = 0.12\textwidth]{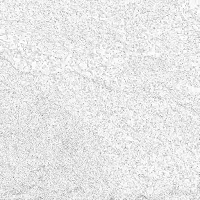}
\includegraphics[height = 0.12\textwidth]{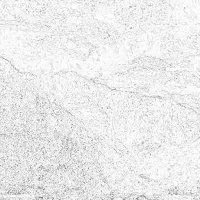}
\includegraphics[height = 0.12\textwidth]{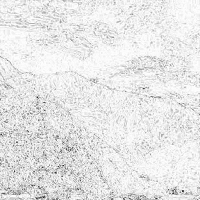}
\includegraphics[height = 0.12\textwidth]{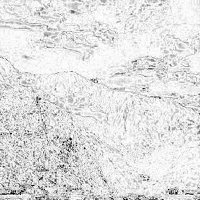}
\includegraphics[height = 0.12\textwidth]{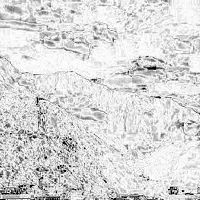}

\subfigure[$\lambda = 0.02$.]
{\includegraphics[height = 0.12\textwidth]{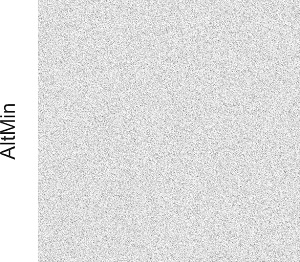}}
\subfigure[$\lambda = 0.04$.]
{{\includegraphics[height = 0.12\textwidth]{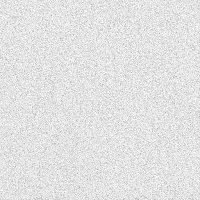}}}
\subfigure[$\lambda = 0.08$.]
{\includegraphics[height = 0.12\textwidth]{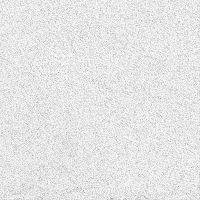}}
\subfigure[$\lambda = 0.16$.]
{\includegraphics[height = 0.12\textwidth]{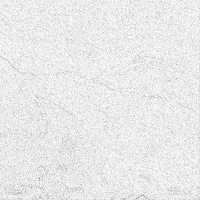}}
\subfigure[$\lambda = 0.32$.]
{\includegraphics[height = 0.12\textwidth]{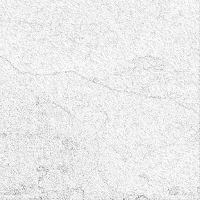}}
\subfigure[$\lambda = 0.64$.]
{\includegraphics[height = 0.12\textwidth]{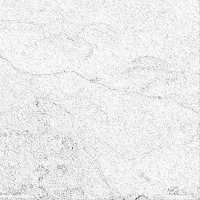}}
\subfigure[$\lambda = 1.28$.]
{\includegraphics[height = 0.12\textwidth]{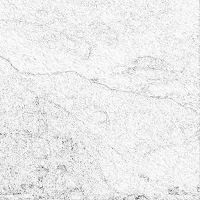}}

%\vspace{-5px}
\caption{Difference of recovered images from DCA (first row) and AltMin (second row) algorithm to the clean image ``\textit{Mountain}''.
The darker pixels indicate larger difference.
Due to lack of space the figures for ``\textit{Sea}'' and ``\textit{Windows}'' are not shown.}
\label{fig:tv:mountain}
\vspace{-10px}
\end{figure*}

\section{Conclusion}

In this paper, 
we addressed the challenging optimization problem of nonconvex $\DifReg$ regularization.
We derived the closed-form solution for the associated proximal step.
This allows subsequent use of state-of-the-art proximal gradient algorithms.
We also extend the results for low-rank matrix learning and total variation model.
Experimental results show that the proximal step can be computed very efficiently.
Superiority of the $\DifReg$-regularizer over other nonconvex regularizers is 
also demonstrated on real data sets.

As for the future works, 
it is interesting to consider stochastic optimization algorithms,
such as stochastic variance reduction gradient descent (SVRG) algorithm \cite{johnson2013accelerating}, 
with the $\DifReg$ regularization.
Although we have solved the problem with proximal step,
the convergence of SVRG algorithm is still not clear for such a nonconvex regularizer.
%Another future direction would be developing a factorization from of the $\DifReg$ regularization for low-rank matrix learning,
%as we have seen the factorization based method can be faster than nonconvex regularization methods.
%With factorization, 
%it is possible to simultaneously maintain good recovery performance and achieve fast optimization.

{
\bibliographystyle{IEEEtran}
\bibliography{bib}
}

% Can use something like this to put references on a page
% by themselves when using endfloat and the captionsoff option.
\ifCLASSOPTIONcaptionsoff
  \newpage
\fi

\vspace{-40px}

\begin{IEEEbiography}[{\includegraphics[width = 1\textwidth]{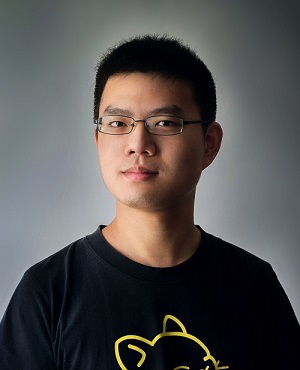}}]{Quanming Yao}
	%\footnote{+++ biography should also be included in 14 pages.}
	received the bachelor’s degree in Electronic and Information Engineering from the  Huazhong University of Science and Technology (HUST) in
	2013. Currently, he is working toward the PhD
	degree in the Department of Computer Science and Engineering
	in the Hong Kong
	University of Science and Technology. His
	research interests focus on machine learning,
	data mining, application problems on computer
	vision and other problems in artificial intelligence.
	He was awarded as Qiming star of HUST in 2012,
	and received the Google PhD fellowship (machine learning) in 2016.
\end{IEEEbiography}

\vspace{-40px}

\begin{IEEEbiography}[{\includegraphics[width = 1\textwidth]{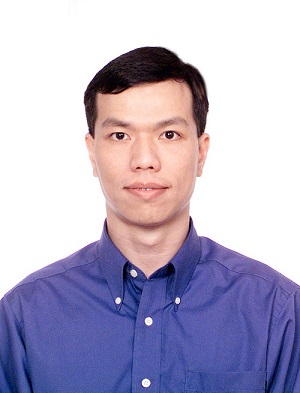}}]{James T. Kwok}
	received the PhD degree in
	computer science from the Hong Kong University
	of Science and Technology in 1996. He was
	with the Department of Computer Science,
	Hong Kong Baptist University, Hong Kong, as
	an assistant professor. He is currently a professor
	in the Department of Computer Science and
	Engineering, Hong Kong University of Science
	and Technology. His research interests include
	kernel methods, machine learning, example recognition,
	and artificial neural networks. He
	received the IEEE Outstanding 2004 Paper Award, and the Second
	Class Award in Natural Sciences by the Ministry of Education, People’s
	Republic of China, in 2008. He has been a program cochair for a number
	of international conferences, and served as an associate editor for
	the IEEE Transactions on Neural Networks and Learning Systems
	from 2006-2012. Currently, he is an associate editor for the Neurocomputing
	journal.
%	He is an IEEE Fellow.
\end{IEEEbiography}

\vspace{-40px}

\begin{IEEEbiography}[{\includegraphics[width = 1\textwidth]{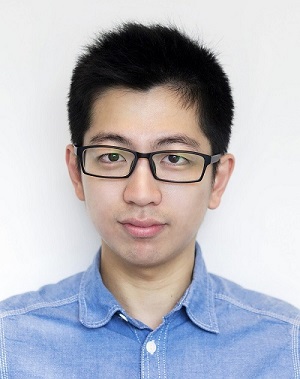}}]{Xiawei Guo}
	received the bachelors degree in Electronic Information Science and Technology from Nanjing University in 2013. Currently, he is working towards the MPhil degree in computer science at the Hong Kong Univiersity of Science and Technology. His research interests focus on machine learning, data mining and other problems in artificial intelligence.
\end{IEEEbiography}

% insert where needed to balance the two columns on the last page with
% biographies
%\newpage

%\begin{IEEEbiographynophoto}{Jane Doe}
%Biography text here.
%\end{IEEEbiographynophoto}

% You can push biographies down or up by placing
% a \vfill before or after them. The appropriate
% use of \vfill depends on what kind of text is
% on the last page and whether or not the columns
% are being equalized.

%\vfill

% Can be used to pull up biographies so that the bottom of the last one
% is flush with the other column.
%\enlargethispage{-5in}

\appendices

\end{document}